\documentclass{article}

% if you need to pass options to natbib, use, e.g.:
%     \PassOptionsToPackage{numbers, compress}{natbib}
% before loading neurips_2025

% ready for submission
\usepackage[final]{neurips_2025}

% to compile a preprint version, e.g., for submission to arXiv, add add the
% [preprint] option:
%     \usepackage[preprint]{neurips_2025}

% to compile a camera-ready version, add the [final] option, e.g.:
%     \usepackage[final]{neurips_2025}

% to avoid loading the natbib package, add option nonatbib:
%    \usepackage[nonatbib]{neurips_2025}

\usepackage[utf8]{inputenc} % allow utf-8 input
\usepackage[T1]{fontenc}    % use 8-bit T1 fonts
\usepackage{hyperref}       % hyperlinks
\usepackage{url}            % simple URL typesetting
\usepackage{booktabs}       % professional-quality tables
\usepackage{amsfonts}       % blackboard math symbols
\usepackage{nicefrac}       % compact symbols for 1/2, etc.
\usepackage{microtype}      % microtypography
\usepackage{xcolor}         % colors

\usepackage{hyperref}
\usepackage{graphicx}
\usepackage{float}
\usepackage{url}
\usepackage{amsthm}
\usepackage{amsmath}
\usepackage{caption}
\usepackage{subcaption}
\usepackage{comment}
\usepackage{booktabs}
\usepackage{siunitx}
\usepackage{multirow}

% added packages
\usepackage{algorithm}
\usepackage{algpseudocode}

\newtheorem{theorem}{Theorem}[section]

\newtheorem{definition}{Definition}[section]

% random variables
\def\rvx{{\mathbf{x}}}
\def\rvy{{\mathbf{y}}}

% inner product
\newcommand{\inner}[2]{ \langle #1 | #2 \rangle }
\newcommand{\Inner}[2]{ \left\langle #1 \bigg| #2 \right\rangle }
\newcommand{\vect}[1]{\boldsymbol{#1}}
\newcommand{\mean}[2]{\mathbb{E}_{#2}\left[ #1 \right]}

% norm
\newcommand{\norm}[1]{\left\lVert#1\right\rVert}

\title{Entropic Time Schedulers for Generative Diffusion Models}

\author{%
  Dejan Stančević \thanks{dejan.stancevic@donders.ru.nl} \\
  Radboud University \\
  % examples of more authors
    \And
   Florian Handke \\
   Ghent University - imec \\
   \AND
   Luca Ambrogioni \\
  Radboud University \\
  % Address \\
  % \texttt{email} \\
  % \And
  % Coauthor \\
  % Affiliation \\
  % Address \\
  % \texttt{email} \\
  % \And
  % Coauthor \\
  % Affiliation \\
  % Address \\
  % \texttt{email} \\
}

\begin{document}
\maketitle

\begin{abstract}
The practical performance of generative diffusion models depends on the appropriate choice of the noise scheduling function, which can also be equivalently expressed as a time reparameterization. In this paper, we present a time scheduler that selects sampling points based on entropy rather than uniform time spacing, ensuring that each point contributes an equal amount of information to the final generation. We prove that this time reparameterization does not depend on the initial choice of time. Furthermore, we provide a tractable exact formula to estimate this \emph{entropic time} for a trained model using the training loss without substantial overhead. Alongside the entropic time, inspired by the optimality results, we introduce a rescaled entropic time. In our experiments with mixtures of Gaussian distributions and ImageNet, we show that using the (rescaled) entropic times greatly improves the inference performance of trained models. In particular, we found that the image quality in pretrained EDM2 models, as evaluated by FID and FD-DINO scores, can be substantially increased by the rescaled entropic time reparameterization without increasing the number of function evaluations, with greater improvements in the few NFEs regime. Code is available at
\url{https://github.com/DejanStancevic/Entropic-Time-Schedulers-for-Generative-Diffusion-Models}.

%In our experiments with mixtures of Gaussian distributions and images, we show that using the (rescaled) entropic times greatly improves the inference performance of trained models for the first-order solvers. 
\end{abstract}

\section{Introduction} % 1.25 pages (figure of symmetry breaking in 1D 2 delta model and conditional entropy derivative)
Generative diffusion models \citep{sohldickstein2015deepunsupervisedlearningusing}, and especially score-based diffusion models, have achieved state-of-the-art performance in image \citep{dhariwal2021diffusionmodelsbeatgans, rombach2022highresolutionimagesynthesislatent, song2021scorebasedgenerativemodelingstochastic} and video generation \citep{ho2022videodiffusionmodels, singer2022makeavideotexttovideogenerationtextvideo}. Generative diffusion models are obtained by reverting a forward diffusion process, which injects noise into the distribution of the data until all information has been lost. In practice, the performance of these models is highly dependent on the choice of a noise scheduling function that regulates the rate of noise-injection \citep{song2022denoisingdiffusionimplicitmodels}. In most commonly used models, a change of noise scheduling is mathematically equivalent to a change of time parameterization. From a theoretical perspective, the choice of time parametrization, or equivalently of noise scheduling, is not constrained by theory since any change of time in the forward process is automatically corrected in reverse dynamics \citep{song2021scorebasedgenerativemodelingstochastic}. However, as explained above, the choice of time is very important practically since it affects both the temporal weighting during training and the discretization scheme during inference. Consequently, an 'incorrect' choice of time variable can lead to severe inefficiencies due to the under-sampling of some temporal windows and the redundant over-sampling of others. This is particularly problematic since recent theoretical and experimental work suggested that 'generative decisions' tend to be clustered in critical time windows \citep{li2024criticalwindowsnonasymptotictheory}, which have been connected to symmetry-breaking phase transitions in physics \citep{raya2023spontaneoussymmetrybreakinggenerative, ambrogioni2024statisticalthermodynamicsgenerativediffusion, Biroli_2024, sclocchi2024phasetransitiondiffusionmodels}. The "triviality" of the first phase of diffusion prior to the initial phase transitions has led to the idea that this early phase can be skipped in one 'jump' using a pre-trained initialization \cite{lyu2022accelerating}. These late initialization schemes can be seen as a special case of time re-scheduling that compresses the high-noise part of the original schedule.

The idea of changing the diffusion time in a data-dependent way, also known as time-warping, was first introduced in \cite{dieleman2022continuous} in the context of a class of diffusion models for sequences of discrete tokens. However, their implementation required the used of special architectures trained with cross-entropy loss instead of the standard denoising score matching. In this paper, we show that a natural data-dependent time parametrization can be tractably obtained for any continuous generative diffusion model as the (rescaled) conditional entropy of $\rvx_0$ given $\rvx_t$. This choice of time leads to a constant entropy rate, meaning that each time point contributes to the final generation in an equal amount. Furthermore, we show that this \emph{entropic time} is invariant, meaning that it does not depend on the original choice of time parameterization. Examples of the same SDE in the entropic time and standard time are given in figure \ref{fig: Evolution of Distributions, 1-D}. Furthermore, inspired by the optimality results, we introduce a \emph{rescaled entropic time}. We provide an exact tractable formula that relates these quantities to the empirical EDM \citep{karras2022elucidating} and DDPM  \citep{song2022denoisingdiffusionimplicitmodels} loss, which can be used to easily define the entropic time for any given trained network.

\begin{figure}[h]
    \centering 
    \includegraphics[scale=0.45]{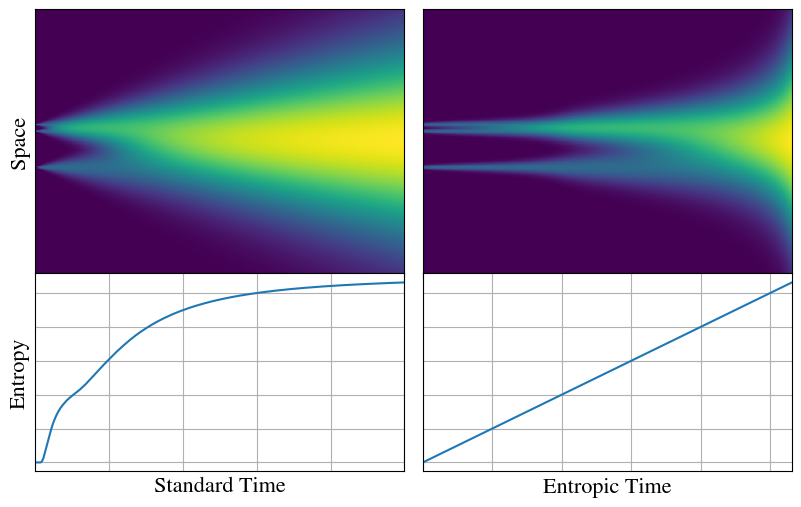}
    \caption{An example of the same SDE and its conditional entropy in the standard and entropic time.}
    \label{fig: Evolution of Distributions, 1-D}
\end{figure}

\section{Related work}
\textbf{Accelerated Sampling Procedures} One of the most significant challenges in current diffusion models is the slow generative process. Since the introduction of the connection between the diffusion models and SDEs \citep{song2021scorebasedgenerativemodelingstochastic}, a wide array of research has aimed to address this issue by designing better numerical integrators. Some of the research in that direction includes the works of \citet{liu2022pseudo} and \citet{lu2022fast}. An alternative line of research focuses on optimizing the sampling time schedule itself. \citet{sabour2024align} presents a principled approach to optimizing sampling schedules in diffusion models by aligning them with stochastic solvers, enabling higher efficiency. \citet{wang2024closer} splits the generation process into three categories (acceleration, deceleration, and convergence steps), identifies imbalances in time step allocation, and introduces methods to address them, leading to faster training and sampling. \citet{lee2024beta} uses spectral analysis of images to design a sampling strategy that prioritizes critical time steps, improving quality while reducing the number of steps. \citet{li2023autodiffusion} explores joint optimization of time steps and architectures for more efficient generation without additional training. While much of this research focuses on learning or empirically determining optimal sampling schedules, our work provides a more theoretical perspective based on ideas from information theory. The closest work to ours is \citet{dieleman2022continuous}, in which they use a cross-entropy loss to deduce a time-warping function for diffusion language models. However, our work differs since we analyze the standard diffusion models, where cross-entropy is not available. Furthermore, their expression for the entropy is not exact, as it implies an assumption of conditional independence of the tokens given the noisy state. On the other hand, here we provide exact formulas that can be applied to any generative diffusion model trained with denoising score matching, both in the continuous and in the discrete regime. 

\textbf{Connection Between Entropy, Information Theory, and Diffusion Models}
Diffusion models are inherently tied to concepts from information theory, particularly in the context of denoising Gaussian noise, which is a fundamental operation in information-theoretic frameworks. This connection has inspired a growing body of work exploring the interplay between diffusion models and information theory. \citet{premkumar2024neuralentropy} investigates entropy-based objectives for learning more robust generative models. \citet{kong2023information}, \citet{kong2023interpretable},  and \citet{franzese2025latent} aim to provide a clearer understanding of diffusion models through an information-theoretic lens. Although these works explore the connection between information theory and diffusion models, and employ similar equations to ours, our focus diverges slightly. We use information theory as a guide to design better sampling algorithms. Work exploring a similar direction to ours is \citet{li2025improving}. However, they explore the conditional entropy between two consecutive time steps given a fixed discretization grid, while we look at the conditional entropy between the current time step and time zero in a way that is invariant under the change of time and discretization.

\section{Background on score-matching generative diffusion}
The mathematics of generative diffusion models can be elegantly formalized in term of stochastic differential equations (SDE). Consider a target distribution $p(\rvx_0)$ defined by a data source such as a distribution of, for example, natural images, sound waves, or linguistic strings. We interpret this data source as the initial distribution of a diffusion process governed by the SDE:
\begin{equation} \label{eq: forward sde}
    dX_t = \vect{f}(X_t, t) dt + g(t) dW~,
\end{equation}
where $dW$ is a standard Wiener process, $\vect{f}(X_t, t)$ is a vector-valued drift function, and $g(t)$ is a scalar volatility function, which regulates the standard deviation of the input noise. The marginal densities of the process can be obtained from the Fokker-Planck equation:
\begin{equation}
    \partial_t p_t(\rvx_t) = \sum _{j=1}^d  \partial_{x_j} \left( - f_j(\rvx_t, t)  + \frac{g^2(t)}{2} \partial_{x_j} \right) p_t(\rvx_t)~,
\end{equation}
where $\partial_t$ is the partial derivative with the respect to time and $\partial_{x_j}$ is the partial derivative with respect to the $j$-th component of $\rvx_t$. 
We denote the forward "solution kernel" of the diffusion process as $p(\rvx_t \mid \rvy)$, which is the solution of the Fokker-Plank equation for $p_0(\rvy) = \delta(\rvy - \rvx_0)$. The core idea of generative diffusion is to sample from $\rvx_0$ by initializing an asymptotic noise state $\rvx_T$ (where $T$ is large enough for the SDE to reach its stationary distribution) and by "inverting" the temporal dynamic. This can be done using the reverse SDE:
\begin{equation} \label{eq: backward sde}
    dX_t = \left(f(X_t, t) - g(t)^2 \nabla \log{p(X_t)} \right)dt + g(t) d\tilde{W}~,
\end{equation}
which can be proven to give the same marginal densities of eq. \ref{eq: forward sde} when initialized with the appropriate stationary distribution, which is usually Gaussian white noise. We denote the reverse solution kernel of the reverse dynamics as $q(\rvx_0 \mid \rvx_t)$, which can be interpreted as the optimal denoising distribution. The data-dependent key component of the reverse dynamics is the so-called \emph{score function}, which can be written as an expectation over the optimal denoising distribution:
\begin{equation}
    \nabla \log{p_t(\rvx_t)} = \mean{\nabla \log{p_t(\rvx_t \mid \rvx_0)}}{q(\rvx_0 \mid \rvx_t)}~.
\end{equation}
In most practical forms of generative diffusion, the score function is approximated using a deep network $\vect{s}_{\vect{\theta}}(\rvx_t, t)$, where the parameters $\vect{\theta}$ are optimized by minimizing an upper bound on the quadratic score-matching loss:
\begin{equation}
\begin{aligned}
    &\mathcal{L}_{\text{SM}}(\vect{\theta}) \equiv \mean{\norm{\vect{s}_{\vect{\theta}}(\rvx_t, t) - \nabla \log{p_t(\rvx_t)} }^2 }{p_0(x_0), t \sim \lambda(t)} \\
    &\leq \mean{\norm{\vect{s}_{\vect{\theta}}(\rvx_t, t) - \nabla \log{p_t(\rvx_t \mid \rvx_0)} }^2 }{p_0(x_0), p(x_t \mid x_0), t \sim \lambda(t)} \equiv \mathcal{L}_{\text{DSM}}(\vect{\theta})
\end{aligned}
\label{eq: score matching}
\end{equation}
where $\lambda(t)$ is a density defined on the time axis. Note that $\mathcal{L}_{\text{SM}}(\vect{\theta})$ and $\mathcal{L}_{\text{DSM}}(\vect{\theta})$ differ only by a constant and therefore have the same gradients and optima. However, $\mathcal{L}(\vect{\theta})$ is substantially more tractable as it does not require samples from the unknown optimal denoiser $q(\rvx_0 \mid \rvx_t)$.

\section{Optimal sampling schedule as a change of time}

In this section, we revisit a result from \citealt{sabour2024align} and notice some interesting features. Inspired by it, we formalize what we mean by the change of time.

Obtaining the analytical expression for the optimal sampling schedule is difficult and, in most practical cases, impossible. However, \citealt{sabour2024align} shows that for the EDM noise schedule \citep{karras2022elucidating}, the optimal sampling schedule for the ODE flow when data comes from a normal distribution with variance $c^2$ has an analytical expression. More precisely, the sampling schedule, $[t_{min}, t_1, ... t_{max}]$, that minimize the KL divergence is given by
\begin{equation}
\label{eq: optimal schedule}
     \arctan\left( \frac{t_i}{c} \right) = \alpha_{min} + \frac{i}{N} (\alpha_{max} - \alpha_{min})
\end{equation}
where $\alpha_{min/max} = \arctan \left( \frac{t_{min/max}}{c} \right)$ (see theorem 3.1 in \citealt{sabour2024align}). It turns out that this schedule is also optimal for the deterministic DDIM \citep{song2022denoisingdiffusionimplicitmodels}, which we show in Appendix \ref{appendix: DDDIM schedule}. Moreover, since the DDIM solver is invariant under time change \citep{lu2022fast}, its optimal schedule remains invariant under time change, making it particularly suitable for comparing different time parameterizations.

This implies that even in the simple case, the optimal schedule depends on the data distribution. In addition, this result frames the optimization of a sampling schedule as a problem of time change. Rather than selecting timesteps differently for different numbers of sampling steps (e.g. EDM scheduler), theorem 3.1 shows that one should think of the sampling schedule as a transformation of time such that the sampling schedule becomes linear in the new time. Furthermore, in section \ref{section: isoentropic time}, we will connect equation \ref{eq: optimal schedule} with the conditional entropy production.

\subsection{Change of time}
The change of time in SDEs is a powerful technique used to simplify their analysis and solutions. By altering the time variable, the dynamics of the SDE can be transformed into a more manageable form. More information can be found in section $8.3$ in \citet{lawler2010stochastic}.
\begin{definition}
    We say a function $\phi$ is a proper time change if it is continuous and strictly increasing.
\end{definition}
It can be shown that given a proper time change, $f$, and a random process, $X_t$, that solves the SDE
$
    dX_t = f_t(X_t, t) dt + g_t(t) dW_t~,
    \label{eq: SDE1}
$
then $Y_t=X_{\phi(t)}$ solves
$
    dY_t = \dot{\phi}(t) f_t(Y_t, \phi(t)) dt + \sqrt{\dot{\phi}(t)} g_t(\phi(t)) dW_{t}~.
$
Guided by the theory of time change, we define an equivalence between SDEs.

\begin{definition} \label{definition: equivalence up to a time change}
Given two SDEs
$
    dX_t = f(X_t, t) dt + g(t) dW_t
$
\text{and} $
    dX_s = \tilde{f}(X_s, s) ds + \Tilde{g}(s) dW~,
    \label{eq: SDE2}
$
we say that they are equivalent up to a time change if there exists a proper time change, $\phi: t \mapsto s$, such that
\begin{enumerate}
    \item $ \dot{\phi}(t) \Tilde{f}(x, \phi(t)) = f(x, t) $
    \item $ \sqrt{\dot{\phi}(t)} \Tilde{g}(\phi(t)) = g(t) $.
\end{enumerate}
\end{definition}
Furthermore, we can require $f(0) = 0$ without affecting anything (since it is equivalent to subtracting a constant from the original function). By requiring that, we get that a time change between two SDEs is unique if it exists. Under a time change, the forward kernels stay the same, in the sense that $p_t(x|x_0) = q_{\phi(t)}(x|x_0)$ holds (this follows from $Y_t=X_{\phi(t)}$). Essentially, time change squeezes and stretches the time axis but does not fundamentally change the diffusion process. Algorithm~\ref{algo: Time Change Sampling} shows how to implement sampling using time change.

Given this notion of equivalence, a natural question arises: Is there a preferred or canonical time parameterization? We argue that a conditional entropy, $\mathbf{H}[x_0|x_t]$, and quantities derived from it are good candidates. However, for $\mathbf{H}[x_0|x_t]$ to make sense, we assume that we are given an initial distribution, $p_0(x)$ (that is, a data set). Therefore, besides an SDE, we require a dataset for the entropic time. In the further text, we will always assume that the dataset is given and is the same for different time parameterizations of SDEs.

\begin{algorithm}
\caption{Sampling using time change}
\label{algo: Time Change Sampling}
\begin{algorithmic}[1] 
\Procedure{TimeChangeSampler}{$\{t_i\}_{i=0}^M, \{\phi(t_i)\}_{i=0}^M, \sigma(t), s(t), D_\theta(x,\sigma), \newline \text{solver}(x, D_\theta, \sigma_{\text{cur}},\sigma_{\text{next}}, s_{\text{cur}}, s_{\text{next}}), N$}
\State $\tau_j \leftarrow \phi(t_0) + \frac{j}{N-1} (\phi(t_M)-\phi(t_0)) \text{ for } j=0,\dots N-1$ \Comment{Uniform spacing in new time}
\State $\tilde{t}_j \leftarrow \text{interp}(\tau_j; \{\phi(t_i)\}_i^M, \{t_i\}_i^M) \text{ for } j=0,\dots N-1$
\Comment{Corresponding old time}
\State $\tilde{\sigma}_j, \tilde{s}_j \leftarrow \sigma(\tilde{t}_j), s(\tilde{t}_j) \text{ for } j=0,\dots N-1$
\State $\tilde{\sigma}, \tilde{s} \leftarrow \{ 0, \tilde{\sigma} \}, \{ 1, \tilde{s} \}$
\State \textbf{sample} $x \sim \mathcal{N}(0, \tilde{\sigma}_{N}^2I)$
\For{j $\in \{N,\dots,1\}$}
\State $\sigma_{\text{cur}}, \sigma_{\text{next}} \leftarrow \tilde{\sigma}_j, \tilde{\sigma}_{j-1}$
\State $s_{\text{cur}}, s_{\text{next}} \leftarrow \tilde{s}_j, \tilde{s}_{j-1}$
\State $x \leftarrow \text{solver}(x, D_\theta, \sigma_{\text{cur}}, \sigma_{\text{next}}, s_{\text{cur}}, s_{\text{next}})$ \Comment{e.g. Heun, DDIM, etc.}
\EndFor
\State \textbf{output} $x$
\EndProcedure
\end{algorithmic}
\end{algorithm}

\section{Entropic time schedules}

In this section, we introduce the concepts of entropic time and rescaled entropic time. First, we provide some reasons for using the conditional entropy as a new time parameterization. Then, we show how to obtain the conditional entropy in practice and show its connection with commonly used quantities in diffusion literature. Furthermore, we demonstrate that the entropic time parameterizations are well-defined and invariant under the initial time parameterization of the SDE. There are several possible choices for the entropy function, which highlight different aspects of information transfer. The most straightforward choice is the information transfer $T_t$. Consider an initial source $\rvx_0 \sim p_0$ is transmitted through a noisy channel $p(\rvx_t \mid \rvx_0)$, which is determined by the solution of the SDE given in eq. \ref{eq: forward sde}. The noise-corrupted signal is received and decoded using $q(\rvx_0 \mid \rvx_t)$. The amount of information transferred at time $t$ can be quantified as the difference between the prior and posterior entropy:
\begin{equation} \label{eq: information transfer}
    \mathbf{T}_t = \mathbf{H}[\rvx_0] - \mathbf{H}[\rvx_0 | \rvx_t] = \mathbf{I}[\rvx_0; \rvx_t]
\end{equation}
 where $\mathbf{H}[\rvx_0] = \mean{\log{p(\rvx_0)}}{p_0(x_0)}$ is the entropy of the source, $\mathbf{H}[\rvx_0|\rvx_t] = \mean{\log{p(\rvx_0|\rvx_t)}}{p(x_0, x_t)}$ is the conditional entropy under the optimal denoising distribution, and $\mathbf{I}[\rvx_0; \rvx_t]$ is a mutual information. Therefore, it is natural to interpret this quantity as the amount of information available at time $t$ concerning the identity of the source data. Up to a constant shift, this is equivalent to using the time variable $\phi(t) = \mathbf{H}[\rvx_0|\rvx_t]$ in the forward process. This time axis is defined by having a constant conditional entropy rate between the final generated image and the noisy state at time $t$.

\subsection{Characterizing the conditional entropy} \label{section: Estimating a conditional entropy}

Having established that a conditional entropy makes sense as a new time parameterization, a question arises: How do we calculate it in practice? In general, conditional entropy can be written as
$
    \mathbf{H}[\rvx_0|\rvx_t] = \mathbf{H}[\rvx_0] - \mathbf{I}[\rvx_0; \rvx_t] = \mathbf{H}[\rvx_0] + \mathbf{H}[\rvx_t|\rvx_0] - \mathbf{H}[\rvx_t]~.
$

In practice, $\mathbf{H}[\rvx_t|\rvx_0]$ is easy to get once the forward kernel is known, but it is difficult to obtain a numerical value of $\mathbf{H}[\rvx_t]$. However, by looking at a time derivative of the conditional entropy, we get a method for obtaining a numerical value. The time derivative is given by
\begin{equation}
\begin{aligned}
    \dot{\mathbf{H}}[\rvx_0|\rvx_t] = \dot{\mathbf{H}}[\rvx_t|\rvx_0] - \dot{\mathbf{H}}[\rvx_t].
\end{aligned}
\label{eq: conditional entropy production}
\end{equation}
Hence, to know the time derivative, we need to calculate the time derivative of $\mathbf{H}[\rvx_t]$. In case when an SDE is given by \ref{eq: SDE1}, the entropy production is given by
\begin{equation}
    \dot{\mathbf{H}}[\rvx_t] = \mathbb{E}_{p_t(x_t)} [\nabla (f_t)] + \frac{g^2_t}{2} \mathbb{E}_{p_t(x_t)} [|| \nabla \log p(\rvx_t) ||^2 ].
    \label{eq: entropy production}
\end{equation}
The equation is a well-known expression in nonequilibrium thermodynamics for entropy production \citep{premkumar2024neuralentropy}. The derivation of the expression can be found in the appendix \ref{section: entropy production}. Similarly, we can obtain the similar expression for $\mathbf{H}[\rvx_t|\rvx_0]$. Combining these two expressions, we obtain
\begin{equation}
\begin{aligned}
    \dot{\mathbf{H}}[\rvx_0|\rvx_t] = \frac{g^2_t}{2} \left( \mathbb{E}_{p(x_t,x_0)} [|| \nabla \log p(\rvx_t|\rvx_0) ||^2 ] - \mathbb{E}_{p_t(x_t)} [|| \nabla \log p(\rvx_t) ||^2 ] \right).
\end{aligned}
\label{eq: conditional entropy in terms of norms}
\end{equation}
Note that this expression depends on the data distribution only through the Euclidean norm of the score function, which is approximated by a neural network in diffusion models.

\subsection{Estimating the entropy rate from the training loss}
In this section, we present a connection between the conditional entropy rate and training loss. For more details on the derivation of these results, see the Appendix \ref{section: connection with a quadratic error and loss}. In practice, most diffusion models can be written using the framework introduced in \citet{karras2022elucidating}. In this framework, the SDE is written as
$
    dX_t = \frac{\dot{s}(t)}{s(t)} X_t dt + s(t) \sqrt{2 \dot{\sigma}(t) \sigma(t)} dW~,
$
with $p(x_t|x_0) = \mathcal{N}(x_t; s(t) x_0, s(t)^2 \sigma(t)^2 I)$ as a forward kernel. This leads to the following conditional entropy production 
\begin{equation}
\begin{aligned}
\label{eq: EDM conditional entropy}
    \dot{\mathbf{H}}[\rvx_0|\rvx_t] & = \frac{D \dot{\sigma}(t)}{\sigma(t)} - s(t)^2 \dot{\sigma}(t) \sigma(t) \mathbb{E}_{p_t(x_t)} [|| \nabla \log p_t(\rvx_t) ||^2 ]
\end{aligned}
\end{equation}
where $D$ is a dimension of the space (e.g. for the MNIST dataset, it would be $28^2$). In the rest of this paper, we will be using this framework. The squared error, $\epsilon_t^2$, encapsulates our uncertainty at time $t$ about the final sample $x_0$ and is given by
\begin{equation}
\begin{aligned}
    \epsilon_t^2 & = \mathbb{E}_{p_t(x_t)} [ \mathbb{E}_{p(x_0 | x_t)}[||\rvx_0 - \mathbb{E}_{p(y_0 | x_t)}[\rvy_0]||^2] ] 
     = \mathbb{E}_{p_t(x_t)} [tr(\sigma_{\rvx_0|\rvx_t}^2)].
\end{aligned}
\end{equation}
Using the fact that we can write $\sigma_{\rvx_0|\rvx_t}^2$ as
$
    \sigma(t)^2 (I + s(t)^2 \sigma(t)^2 H[\log p_t(x_t)])
$ (see Appendix \ref{section: Tweedie second order}),
we get
\begin{equation}
\begin{aligned}
    \dot{\mathbf{H}}[\rvx_0|\rvx_t] = \frac{\dot{\sigma}(t)}{\sigma(t)^3} \epsilon_t^2.
\end{aligned}
\label{eq: connection between quad_error and conditional entropy production}
\end{equation}
Recognizing that $\dot{SNR} = - \frac{\dot{\sigma}}{\sigma^3}$ and $\dot{\mathbf{H}}[\rvx_0|\rvx_t] = - \dot{\mathbf{I}}[\rvx_t;\rvx_0]$, equation \ref{eq: connection between quad_error and conditional entropy production} is precisely the well-known $I$--MMSE identity from information theory \citep{guo2005mutual}.

Furthermore, integrating over time yields an expression for the conditional entropy:
\begin{equation}
\begin{aligned}
    \mathbf{H}[\rvx_0|\rvx_1] 
    = - \int_0^1 \dot{\mathrm{SNR}}(t)\, \epsilon_t^2 \, dt.
\end{aligned}
\end{equation}
This expression coincides with the continuous-time (infinite-step) limit of the variational lower bound derived by \citet{kingma2021variational}, revealing a direct information-theoretic characterization as already hinted by \citet{kong2023information}.

The previous results provide a simple way of estimating the conditional entropy rate from the standard loss function of a trained diffusion model due to a close connection between the squared error and the loss. This provides a tractable way to estimate the conditional entropy from the training error. Note that, using the error of the model entails an approximation since the entropy is defined with respect to the true score function and, therefore, does not take into account the discrepancy between the learned and true score.

To analyze this deviation, we start from a striking result: the conditional entropy production is, up to a multiplicative factor, the gap between the explicit and denoising score matching loss in \ref{eq: score matching}! In fact, following the steps from \citet{vincent2011connection} and keeping track of the terms that are constant in $\boldsymbol{\theta}$, we have
\begin{equation}
\begin{aligned}
    \mathcal{L}_{\text{SM}}(\vect{\theta}) & = \mathcal{L}_{\text{DSM}}(\vect{\theta})  - \mathbb{E}_{p(x_0, x_t), \lambda(t)}[\norm{\nabla \log p(\rvx_t|\rvx_0)}^2 - \norm{\nabla \log p(\rvx_t)}^2].
\end{aligned}
\end{equation}
Using expression \ref{eq: conditional entropy in terms of norms}, we can rewrite the above equality as
\begin{equation}
    \mathcal{L}_{\text{DSM}}(\vect{\theta}) = \mathcal{L}_{\text{SM}}(\vect{\theta}) + \mathbb{E}_{\lambda(t)} \left[ \frac{2}{g_t^2} \dot{\mathbf{H}}[\rvx_0|\rvx_t] \right].
\end{equation}
This relation can also be expressed at a single time point $t$ as
\begin{align}
    \dot{\mathbf{H}}[\rvx_0|\rvx_t] + \frac{g_t^2}{2}\,\delta_t^2(\boldsymbol{\theta}) 
    = \frac{g_t^2}{2}\, 
    \mathbb{E}_{\rvx_t, \rvx_0}\!\left[
        \big\|\vect{s}_{\boldsymbol{\theta}}(\rvx_t, t) 
        - \nabla \log p_t(\rvx_t \mid \rvx_0)\big\|^2
    \right],
\end{align}
where $ \delta_t^2(\boldsymbol{\theta}) 
    = \mathbb{E}_{p_t(\rvx_t)} 
    \big[\,
        \|\vect{s}_{\boldsymbol{\theta}}(\rvx_t, t) 
        - \nabla \log p_t(\rvx_t)\|^2
    \big] $
denotes the mean squared error between the true score and its neural approximation.  
The right-hand side corresponds to our estimate of the conditional entropy production.  
It follows that the estimated entropy production always upper-bounds the true value, with the gap determined by the disagreement between the learned and true scores, $\delta_t^2(\boldsymbol{\theta})$.  
In this sense, $\dot{\mathbf{H}}[\rvx_0|\rvx_t]$ can be interpreted as the \emph{irreducible} contribution to the loss, reflecting the intrinsic uncertainty of the optimal denoising process.

\subsection{The entropic and rescaled entropic times}
\label{section: isoentropic  time}

Here, we introduce a rescaled entropy and show that both rescaled entropy and conditional entropy are proper changes of time and are invariant under different time parameterizations of SDE. Proofs can be found in the Appendix \ref{section: proofs}. First, we notice that in the case of continuous data, the conditional entropy goes to negative infinity at time equal to zero. In practice, this is not observed since diffusion models always start from a non-zero initial time. However, it adds arbitrariness to the overall curve of the conditional entropy. To combat this problem, guided by the observation that the change of time for the optimal sampling schedule for normally distributed data, eq. \ref{eq: optimal schedule}, is equal to the rescaled entropy (see Appendix \ref{appendix: rescaled entropy}), we introduce a rescaled entropy as $\int_0^t \sigma(\tau) \dot{\mathbf{H}}[\rvx_0|\rvx_{\tau}] d\tau$. Algorithm~\ref{algo: Rescaled Entropy} shows how to estimate rescaled entropy in practice (and how it was estimated in this work).
\begin{theorem}
    Given an SDE and initial data distribution $p_0(x)$, $\phi(t) = \mathbf{H}[x_0|x_t]$ and $\phi(t) = \int_0^t \sigma(\tau) \dot{\mathbf{H}}[\rvx_0|\rvx_{\tau}] d\tau$ are proper time changes.
\end{theorem}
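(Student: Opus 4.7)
The plan is to reduce both claims to showing that each derivative $\dot\phi(t)$ is strictly positive, which gives strict monotonicity; continuity then comes for free because each $\phi$ is the integral of a locally integrable function of $t$. So the only substantive point is to establish $\dot\phi > 0$ in each case.

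For $\phi_1(t) = \mathbf{H}[\rvx_0\mid\rvx_t]$, I would invoke the identity in equation \ref{eq: connection between quad_error and conditional entropy production}, which states that $\dot\phi_1(t) = \dot{\mathbf{H}}[\rvx_0\mid\rvx_t] = \frac{\dot\sigma(t)}{\sigma(t)^3}\,\epsilon_t^2$. In the EDM parameterisation used throughout the paper, $\sigma(t)>0$ and $\dot\sigma(t)>0$ are standing assumptions (the noise scale is strictly increasing in $t$). The remaining factor $\epsilon_t^2 = \mathbb{E}_{p_t}[\mathrm{tr}(\sigma^2_{\rvx_0\mid\rvx_t})]$ is the average posterior variance of the optimal denoiser, which is strictly positive whenever $p_0$ is not a single Dirac mass: a vanishing posterior variance on a set of positive $p_t$-measure would force $\rvx_0$ to be a deterministic function of the noisy observation, contradicting the presence of injected noise. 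Hence $\dot\phi_1>0$, so $\phi_1$ is strictly increasing, and continuity is immediate from the existence of a continuous derivative.

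For $\phi_2(t) = \int_0^t \sigma(\tau)\,\dot{\mathbf{H}}[\rvx_0\mid\rvx_\tau]\,d\tau$, the fundamental theorem of calculus gives $\dot\phi_2(t) = \sigma(t)\,\dot{\mathbf{H}}[\rvx_0\mid\rvx_t]$, and both factors are strictly positive by the preceding argument; continuity of $\phi_2$ follows directly from its being defined as an indefinite integral of a continuous integrand.

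The main obstacle is not computational but conceptual: one needs an implicit non-degeneracy assumption on $p_0$ to guarantee $\epsilon_t^2>0$, since a Dirac $p_0$ would make both proposed clocks constant. For any realistic data distribution this is automatic, but to make the theorem hold exactly as stated (strictly increasing, not merely non-decreasing) this assumption should be recorded explicitly, either as a standing hypothesis on $p_0$ or as a restriction of the theorem to non-atomic initial distributions. Once it is, the rest amounts to expanding definitions and reading positivity off the identity recalled above.
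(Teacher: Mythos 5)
Your proposal is correct and follows essentially the same route as the paper's proof: both reduce the claim to positivity of the derivative via the identity $\dot{\mathbf{H}}[\rvx_0|\rvx_t] = \frac{\dot\sigma(t)}{\sigma(t)^3}\epsilon_t^2$ and observe that $\epsilon_t^2$ vanishes only when $p_0$ is a single point mass. Your explicit flagging of the non-degeneracy assumption on $p_0$ is a point the paper only makes parenthetically, but it is the same argument.
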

We call these time parameterizations an \textit{entropic time} and \textit{rescaled entropic time}, respectively. Naturally, an important question emerges: How does the time parameterization of an initial SDE influence its reparameterized form? We show that an SDE written in entropic time is unique and does not rely on its initial parameterization. More precisely, given two SDEs equivalent up to a time change, the SDEs expressed in their respective entropic times are equivalent up to a time change, with the time change being the identity function (i.e. drift and noise terms of SDEs in entropic times are related by conditions 1. and 2. from definition \ref{definition: equivalence up to a time change}, and are the same since the time derivative of the time change is one).
\begin{theorem}
    Given two SDEs as given in definition \ref{definition: equivalence up to a time change}, and following time changes
    \begin{enumerate}
    \item $ \phi: t \mapsto s=f(t) $
    \item $ \Phi_t: t \mapsto \mathbf{H}_t[\rvx_0|\rvx_t] $
    \item $ \Phi_s: s \mapsto \mathbf{H}_s[\rvx_0|\rvx_s] $,
\end{enumerate}
    it follows that
    $$ F := \Phi_s \circ \phi \circ \Phi_t^{-1}: \mathbf{H}_t[\rvx_0|\rvx_t] \mapsto \mathbf{H}_s[\rvx_0|\rvx_s] $$
    is a proper time change implementing the equivalence and is equal to the identity map, $F = id$.
\end{theorem}
A similar result holds for the rescaled entropic time as well. Therefore, once reparameterized in entropic time (or rescaled entropic time), no matter the starting SDE time parameterization, drift and noise are always the same.

\begin{algorithm}
\caption{Estimation of rescaled entropy, $\int \sigma(\tau)\dot{\mathbf{H}}[\mathbf{x}_0|\mathbf{x}_\tau]d\tau$}
\label{algo: Rescaled Entropy}
\begin{algorithmic}[1] 
\Procedure{EstimateRescaledEntropy}{$D_\theta(x,\sigma)$, $\sigma(t)$, $s(t)$, $\{t_i\}_{i=0}^N$, $M$}
\State \textbf{sample} $x_0^{j} \sim p_0 \text{ for } j = 1, \dots, M$
\State $\mathbf{x}_0 \leftarrow [x_0^1,\dots, x_0^M]$ 
\State $\mathbf{R}_{i:N} \leftarrow \mathbf{0}$
\For{i $\in \{0, \dots, N-1\}$}
    \State \textbf{sample} $\nu_j \sim \mathcal{N}(\mathbf{0},I) \text{ for } j = 1, \dots, M$
    \State $\boldsymbol{\nu} \leftarrow [\nu_1,\dots,\nu_M]$
    \State $\mathbf{x}_{t_i} \leftarrow s(t_i)  \mathbf{x}_0 + s(t_i)\sigma(t_i) \boldsymbol{\nu}$ 
    \State $\hat{\mathbf{x}}_0 \leftarrow D_\theta(\mathbf{x}_{t_i}, \sigma(t_i))$
    \State $\epsilon^2 \leftarrow \frac{1}{M} \sum_{j=1}^M \norm{\hat{\mathbf{x}}^j_0-\mathbf{x}^j_0}^2$
    \State $\mathbf{R}_{i+1} \leftarrow \mathbf{R}_{i} + \frac{\dot{\sigma}(t_i)}{\sigma^2(t_i)} (t_{i+1}-t_i) \epsilon^2$ \Comment{Riemann sum}
\EndFor
\State \textbf{output} $\mathbf{R}_{i:N}$
\EndProcedure
\end{algorithmic}
\end{algorithm}

\subsection{Spectral rescaled entropic time}

We based our definition of rescaled entropic time on optimality results for isotropic Gaussian distributions. However, these results do not account for how different directions in an anisotropic Gaussian influence the optimal schedule. From equation \ref{eq: conditional entropy in terms of norms}, we observe that the total entropy production can be interpreted as a sum of contributions from all basis directions, where the basis can be any orthonormal set (since only norms of scores affect the production). This interpretation corresponds to one specific way of weighting different directions. For image data and diffusion in pixel space, we explore an alternative in this paper: setting the rescaled entropy in each Fourier basis direction to be equal to 1 at the final time (i.e., giving an equal importance for each frequency), and then weighting them by their respective amplitudes. Theorems from the previous section still hold for the spectral rescaled entropy since they hold for each frequency (basis). An example of the resulting rescaled entropy across different frequencies is shown in figure \ref{fig: normalized spectral RE}.

\begin{figure}[H]
    \centering
    \includegraphics[width=0.75\linewidth]{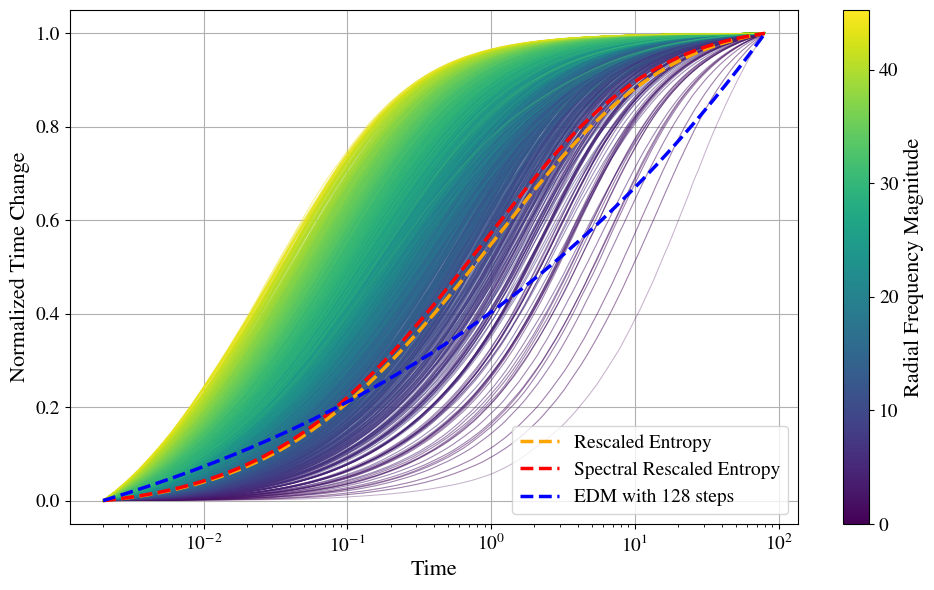}
    \caption{Normalized rescaled entropy as a function of radial frequency for the red channel in ImageNet-64, together with normalized rescaled entropy, spectral rescaled entropy, and EDM with 128 steps.}
    \label{fig: normalized spectral RE}
\end{figure}

\section{Experiments}
We compare the performance of a few-step generation in the standard, entropic, and rescaled entropic times for several low-dimensional examples where an analytic expression for a score is easy to calculate. Next, we compare the performance of trained EDM and EDM2 models \citep{karras2022elucidating, karras2024analyzing} on CIFAR10 \citep{krizhevsky2009learning}, FFHQ\citep{karras2019style}, and ImageNet \citep{russakovsky2015imagenet} using the FID \citep{heusel2017gans} and FD-DINOv2 \citep{oquab2023dinov2, stein2023exposing} scores. More details about the setup can be found in the appendix \ref{appendix: experiments}.

\subsection{One-dimensional experiments}

We used an analytic expression of a score function to compare the performance of a few-step generation process in different time parameterizations in one dimension. We used equidistant steps in the standard, entropic, and rescaled entropic times. We used the stochastic DDIM solver \citep{song2022denoisingdiffusionimplicitmodels}. We compared those schedules for discrete data and a mixture of Gaussians. We used the Kullback-Leibler divergence to compare results for different schedules. An example of KL divergence behavior against the number of generative steps is given in figure \ref{fig: example KL}. In general, we can see that in the discrete case, the entropic time outperforms other schedules by a large margin, while the standard schedule gives the worst results. Furthermore, we noticed that when variances of Gaussians are much smaller than the distance between them (i.e. there is no significant overlap between Gaussians), the entropic schedule gives better results. However, when the variances are not negligible in the mixture of Gaussians case, we can see that the rescaled entropic schedule gives the best results, while the entropic schedule underperforms. This suggests that the entropic time might significantly improve certain discrete diffusion models.

\begin{figure}[H]
    \centering
    \begin{minipage}{0.85\textwidth}
        \centering
        \begin{subfigure}[b]{0.45\textwidth}
            \centering
            \includegraphics[width=\textwidth]{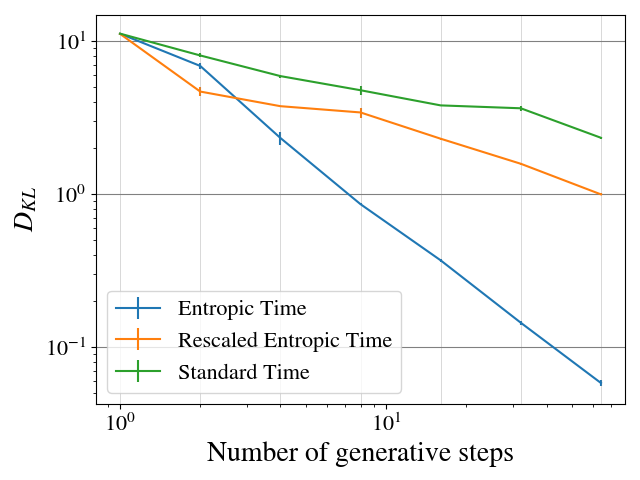}
            \caption{Discrete}
            \label{fig:top-a}
        \end{subfigure}
        \hfill
        \begin{subfigure}[b]{0.45\textwidth}
            \centering
            \includegraphics[width=\textwidth]{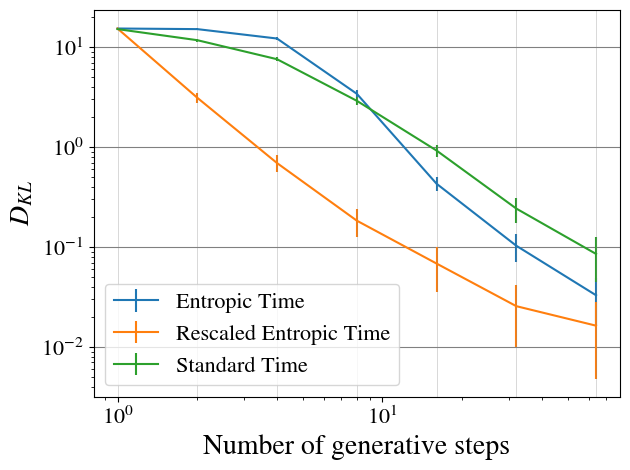}
            \caption{Continuous}
            \label{fig:top-b}
        \end{subfigure}
        \caption{Kullback–Leibler divergence against the number of generative steps for different time parameterizations for mixture of $15$ data points (discrete) and $15$ Gaussians (continuous).}
        \label{fig: example KL}
    \end{minipage}
\end{figure}

% Bottom figure: one wide image

\subsection{CIFAR10, FFHQ, and ImageNet}

We compared the performance for different numbers of generative steps using standard, entropic, rescaled entropic, and spectral rescaled entropic (for diffusion in pixel space) times. To sample, we used the deterministic and stochastic DDIM solver. For CIFAR-10 and FFHQ, the EDM unconditional VP models were used \citep{karras2022elucidating}. For ImageNet-64, the EDM2-S and EDM2-L models were used, while for ImageNet-512, the EDM2-XS and EDM2-XXL models were used. For ImageNet-512, we used both models: one optimized for FID and the other for FD-DINOv2.

\noindent
\begin{minipage}[t]{0.4\textwidth}
Figure \ref{fig: example of generated images} gives an example of the effect of different schedules on generated images. We observed that the rescaled entropic schedule produces images with lower brightness. The results for FFHQ are presented in table \ref{table:FFHQ-FID}, while the FID scores for FID-optimized networks and the DINOv2 scores for DINOv2-optimized networks on ImageNet-512 are reported in table \ref{table: ImageNet-512 results}. We note that the using the rescaled entropy schedule, model EDM-XS beats the FD-DINOv2 result pro-
\end{minipage}
\hfill
\begin{minipage}[t]{0.58\textwidth}
\centering
\captionof{table}{FID scores for different sampling schedules on FFHQ $64 \times 64$}
\label{table:FFHQ-FID}
\resizebox{\textwidth}{!}{%
\begin{tabular}{llccc}
  \toprule
  \multirow{2}{*}{Solver} & \multirow{2}{*}{Schedule} 
    & \multicolumn{3}{c}{FID $\downarrow$} \\
  \cmidrule(lr){3-5}
    & & {NFE=16} & {NFE=32} & {NFE=64} \\
  \midrule
  \multirow{3}{*}{Stochastic DDIM} & EDM 
       & \text{40.48} & \text{21.63} & \text{10.62} \\
       \cmidrule(lr){2-5}
       & Rescaled Entropy 
       & \text{30.81} & \text{14.89} & \text{7.60} \\
       \cmidrule(lr){2-5}
       & Spectral Rescaled Entropy 
       & \textbf{30.61} & \textbf{14.60} & \textbf{7.33} \\
  \midrule
  \multirow{3}{*}{Deterministic DDIM} & EDM 
       & \text{11.13} & \text{5.41} & \text{3.45} \\ 
       \cmidrule(lr){2-5}
       & Rescaled Entropy 
       & \textbf{8.10} & \textbf{4.28} & \text{3.16} \\
       \cmidrule(lr){2-5}
       & Spectral Rescaled Entropy 
       & \textbf{8.10} & \textbf{4.28} & \textbf{3.14} \\
  \bottomrule
\end{tabular}%
}
\end{minipage}
\noindent
vided in \citet{karras2024analyzing}, $103.39$, obtained using Heun solver. We observed that the entropic time produced unrecognizable images (see Appendix \ref{appendix: experiments}), therefore, we have not included it in the results. The difference between spectral rescaled entropy and rescaled entropy is small but noticeable for stochastic DDIM, with spectral rescaled entropy performing better. In contrast, for deterministic DDIM, the difference is negligible. Results on CIFAR10 and FFHQ, together with more examples of generated images, are given in appendix \ref{appendix: experiments}.

\begin{table}[htbp]
  \centering
  \caption{FID and FD-DINOv2 scores for different sampling schedules for ImageNet-$512$}
  \label{table: ImageNet-512 results}
  \resizebox{\textwidth}{!}{%
  \begin{tabular}{lll
                  S[table-format=2.2] S[table-format=2.2] S[table-format=2.2]
                  S[table-format=3.2] S[table-format=3.2] S[table-format=3.2]}
    \toprule
    \multirow{2}{*}{Solver} & \multirow{2}{*}{Network} & \multirow{2}{*}{Schedule} 
      & \multicolumn{3}{c}{FID $\downarrow$} 
      & \multicolumn{3}{c}{FD-DINOv2 $\downarrow$} \\
    \cmidrule(lr){4-6} \cmidrule(lr){7-9}
      & & & {NFE=16} & {NFE=32} & {NFE=64} & {NFE=16} & {NFE=32} & {NFE=64} \\
        \midrule
    \multirow{4}{*}{Stochastic DDIM} & \multirow{2}{*}{EDM2-XS} & EDM 
         & \text{32.31} & \text{10.01} & \text{4.98} & \text{294.25} & \text{149.91} & \text{107.00} \\ \cmidrule(lr){3-9}
         & & Rescaled Entropy 
         & \textbf{13.64} & \textbf{4.98} & \textbf{3.80} & \textbf{182.11} & \textbf{109.68} & \textbf{97.10} \\
    \cmidrule(lr){2-9}
    & \multirow{2}{*}{EDM2-XXL} & EDM 
         & \text{30.39} & \text{8.80} & \text{3.81} & \text{218.10} & \text{95.21} & \text{60.79} \\ \cmidrule(lr){3-9}
         & & Rescaled Entropy 
         & \textbf{13.38} & \textbf{3.83} & \textbf{2.60} & \textbf{108.16} & \textbf{57.05} & \textbf{46.75} \\
    \midrule
    \multirow{4}{*}{Deterministic DDIM} & \multirow{2}{*}{EDM2-XS} & EDM 
         & \text{10.42} & \text{4.81} & \text{3.83} & \textbf{156.46} & \textbf{115.94}  & \text{107.05} \\ \cmidrule(lr){3-9}
         & & Rescaled Entropy 
         & \textbf{7.57} & \textbf{4.44} & \textbf{3.75} & \text{157.32} & \text{116.52}  & \textbf{106.84} \\
    \cmidrule(lr){2-9}
    & \multirow{2}{*}{EDM2-XXL} & EDM 
         & \text{9.68} & \text{3.47} & \text{2.41} & \text{79.56} & \text{52.60}  & \text{46.27} \\ \cmidrule(lr){3-9}
         & & Rescaled Entropy 
         & \textbf{5.91} & \textbf{2.78} & \textbf{2.14} & \textbf{68.36} & \textbf{48.26}  & \textbf{43.83} \\
    \bottomrule
  \end{tabular}%
  }
\end{table}

\subsection{Limitations} \label{section: Limitations}
While our results show a clear benefit of using the entropic schedules across a wide range of datasets and fast-sampling methods, we note that these benefits are observed specifically for the first-order solvers. We tried second-order solvers as introduced in \citet{karras2022elucidating, lu2022fast} but noticed worse results (compared to the EDM schedule). We believe this is due to the use of an inappropriate information transfer function, eq. \ref{eq: information transfer}. Specifically, the definition we use considers only the current time point, whereas second-order solvers also take into account the future time point when predicting the updated state, thereby altering the entropy rate. As a result, the mismatch in temporal perspective may lead to suboptimal performance for higher-order methods as their entropy curves probably need to be readjusted based on the features of the solver.

\begin{figure}[htbp]
    \centering
    \begin{subfigure}[b]{0.45\textwidth}
        \centering
        \includegraphics[width=\textwidth]{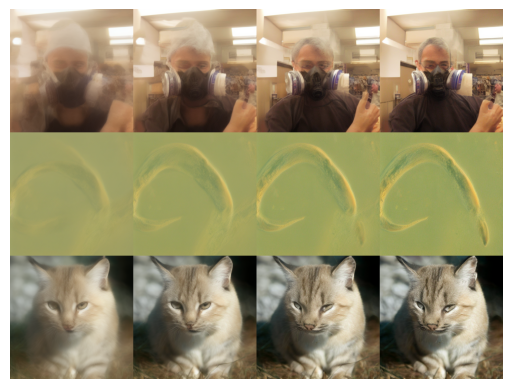}
        \caption{EDM}
    \end{subfigure}
    \hfill % Or \hspace{0.05\textwidth}
    \begin{subfigure}[b]{0.45\textwidth}
        \centering
        \includegraphics[width=\textwidth]{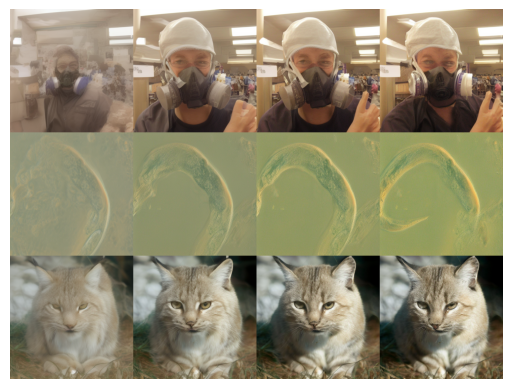}
        \caption{Rescaled Entropy}
    \end{subfigure}
    \caption{Comparison of generated images using EDM and rescaled entropic schedules with the same random seed. Images were generated using deterministic DDIM with NFE = 8, 16, 32, and 64.}
    \label{fig: example of generated images}
\end{figure}

\section{Conclusions and future work} \label{section: Conclusions}
%In this paper, we introduced the concept of entropic time as a natural time reparameterization for generative diffusion models. By ensuring a constant entropy rate across sampling points, entropic time equalizes the contribution of each time step to the generative process. Following the observation that the entropic time can be connected to the optimal sampling schedule for Gaussian data, we introduced the rescaled entropic time. Theoretical results demonstrated the invariance of these times, making them invariant to the initial parameterization of the stochastic differential equations. Empirical results show the promise of our approach. On toy datasets, entropic time improves sampling quality, particularly in discrete settings. On more complex data, such as CIFAR10, FFHQ, and ImageNet, rescaled entropic time yields substantial improvements in FID and FD-DINOv2 scores over the standard EDM schedule, suggesting the utility of information-theoretic criteria for designing sampling schedules in diffusion models.

Several avenues for future work remain open. We conjecture that using conditional entropy provides an optimal schedule for discrete generative tasks, although we currently lack a theoretical proof; nonetheless, our toy examples showed great promise. Empirically aligning entropic time with discrete diffusion models, potentially in the spirit of time warping techniques such as \citet{dieleman2022continuous}, is an exciting direction. Beyond this, entropic time may also offer a principled framework for training and model compression: in distillation, entropic time could identify the most informative stages for supervision and reduce redundancy in transferring knowledge from teacher to student models (similarly could be done for consistency models). More broadly, we propose entropic time as a candidate training schedule, enabling learning that is directly aligned with information flow. We are encouraged by the unexpected connection between our formulation and the continuous-time variational objective of \citet{kingma2021variational}. We envision that this perspective could eventually replace the heavy dataset-specific optimization required in approaches such as EDM \citep{karras2022elucidating}, leading to more efficient and adaptive training across diverse modalities, including medical imaging, audio, and text. Lastly, we note that second-order solvers, which incorporate lookahead steps, may require a fundamentally different definition of information transfer. Developing entropic analogues tailored to such solvers is another important direction for extending this framework.

\newpage
\clearpage

\bibliography{main}

@article{sohldickstein2015deepunsupervisedlearningusing,
  title={Deep unsupervised learning using nonequilibrium thermodynamics},
  author={Sohl-Dickstein, Jascha and Weiss, Eric and Maheswaranathan, Niru and Ganguli, Surya},
  journal={International Conference on Machine Learning},
year={2015}
}

@article{song2022denoisingdiffusionimplicitmodels,
      title={Denoising Diffusion Implicit Models}, 
      author={Jiaming Song and Chenlin Meng and Stefano Ermon},
      journal = {International COnference on Learning Representations},
      year={2022}
}

@article{song2021scorebasedgenerativemodelingstochastic,
      title={Score-Based Generative Modeling through Stochastic Differential Equations}, 
      author={Yang Song and Jascha Sohl-Dickstein and Diederik P. Kingma and Abhishek Kumar and Stefano Ermon and Ben Poole},
      journal = {International Conference on Learning Representations},
      year={2021},
}

@article{rombach2022highresolutionimagesynthesislatent,
  title={High-resolution image synthesis with latent diffusion models},
  author={Rombach, Robin and Blattmann, Andreas and Lorenz, Dominik and Esser, Patrick and Ommer, Bj{\"o}rn},
  journal={Proceedings of the IEEE/CVF conference on computer vision and Pattern Recognition},
  year={2022}
}

@article{dhariwal2021diffusionmodelsbeatgans,
  title={Diffusion models beat GANs on image synthesis},
  author={Dhariwal, Prafulla and Nichol, Alexander},
  journal={Advances in Neural Information Processing Systems},
  year={2021}
}

@article{ho2022videodiffusionmodels,
  title={Video diffusion models},
  author={Ho, Jonathan and Salimans, Tim and Gritsenko, Alexey and Chan, William and Norouzi, Mohammad and Fleet, David J},
  journal={Advances in Neural Information Processing Systems},
  year={2022}
}

@article{singer2022makeavideotexttovideogenerationtextvideo,
  title={Make-a-video: Text-to-video generation without text-video data},
  author={Singer, Uriel and Polyak, Adam and Hayes, Thomas and Yin, Xi and An, Jie and Zhang, Songyang and Hu, Qiyuan and Yang, Harry and Ashual, Oron and Gafni, Oran and others},
  journal={arXiv preprint arXiv:2209.14792},
  year={2022}
}

@article{ambrogioni2024statisticalthermodynamicsgenerativediffusion,
  title={The Statistical Thermodynamics of Generative Diffusion Models: Phase Transitions, Symmetry Breaking, and Critical Instability},
  author={Ambrogioni, Luca},
  journal={Entropy},
  volume={27},
  number={3},
  pages={291},
  year={2025},
  publisher={MDPI}
}

@article{sclocchi2024phasetransitiondiffusionmodels,
  title={A phase transition in diffusion models reveals the hierarchical nature of data},
  author={Sclocchi, Antonio and Favero, Alessandro and Wyart, Matthieu},
  journal={Proceedings of the National Academy of Sciences},
  volume={122},
  number={1},
  pages={e2408799121},
  year={2025},
  publisher={National Academy of Sciences}
}

@article{li2024criticalwindowsnonasymptotictheory,
  title={Critical windows: non-asymptotic theory for feature emergence in diffusion models},
  author={Li, Marvin and Chen, Sitan},
  journal={arXiv preprint arXiv:2403.01633},
  year={2024}
}

@article{raya2023spontaneoussymmetrybreakinggenerative,
  title={Spontaneous symmetry breaking in generative diffusion models},
  author={Raya, Gabriel and Ambrogioni, Luca},
  journal={Advances in Neural Information Processing Systems},
  volume={36},
  pages={66377--66389},
  year={2023}
}

@article{Biroli_2024,
   title={Dynamical regimes of diffusion models},
   volume={15},
   DOI={10.1038/s41467-024-54281-3},
   number={1},
   journal={Nature Communications},
   author={Biroli, Giulio and Bonnaire, Tony and de Bortoli, Valentin and Mézard, Marc},
   year={2024},
   month=nov }

@article{sabour2024align,
  title={Align your steps: Optimizing sampling schedules in diffusion models},
  author={Sabour, Amirmojtaba and Fidler, Sanja and Kreis, Karsten},
  journal={arXiv preprint arXiv:2404.14507},
  year={2024}
}

@article{liu2022pseudo,
  title={Pseudo numerical methods for diffusion models on manifolds},
  author={Liu, Luping and Ren, Yi and Lin, Zhijie and Zhao, Zhou},
  journal={arXiv preprint arXiv:2202.09778},
  year={2022}
}

@article{lu2022fast,
  title={A Fast ODE Solver for Diffusion Probabilistic Model Sampling in Around 10 Steps},
  author={Lu, C and Zhou, Y and Bao, F and Chen, J and Li, C},
  journal={Proc. Adv. Neural Inf. Process. Syst., New Orleans, United States},
  pages={1--31},
  year={2022}
}

@article{karras2022elucidating,
  title={Elucidating the design space of diffusion-based generative models},
  author={Karras, Tero and Aittala, Miika and Aila, Timo and Laine, Samuli},
  journal={Advances in neural information processing systems},
  volume={35},
  pages={26565--26577},
  year={2022}
}

@article{wang2024closer,
  title={A closer look at time steps is worthy of triple speed-up for diffusion model training},
  author={Wang, Kai and Shi, Mingjia and Zhou, Yukun and Li, Zekai and Yuan, Zhihang and Shang, Yuzhang and Peng, Xiaojiang and Zhang, Hanwang and You, Yang},
  journal={arXiv preprint arXiv:2405.17403},
  year={2024}
}

@article{li2023autodiffusion,
  title={Autodiffusion: Training-free optimization of time steps and architectures for automated diffusion model acceleration},
  author={Li, Lijiang and Li, Huixia and Zheng, Xiawu and Wu, Jie and Xiao, Xuefeng and Wang, Rui and Zheng, Min and Pan, Xin and Chao, Fei and Ji, Rongrong},
  journal={Proceedings of the IEEE/CVF International Conference on Computer Vision},
  year={2023}
}

@article{lee2024beta,
  title={Beta Sampling is All You Need: Efficient Image Generation Strategy for Diffusion Models using Stepwise Spectral Analysis},
  author={Lee, Haeil and Lee, Hansang and Gye, Seoyeon and Kim, Junmo},
  journal={arXiv preprint arXiv:2407.12173},
  year={2024}
}

@article{li2025improving,
  title={Improving Denoising Diffusion with Efficient Conditional Entropy Reduction},
  author={Li, Shigui and Chen, Wei and Zeng, Delu},
  year={2025},
}

@article{premkumar2024neuralentropy,
  title={Neural entropy},
  author={Premkumar, Akhil},
  journal={arXiv preprint arXiv:2409.03817},
  year={2024}
}

@article{dieleman2022continuous,
  title={Continuous diffusion for categorical data},
  author={Dieleman, Sander and Sartran, Laurent and Roshannai, Arman and Savinov, Nikolay and Ganin, Yaroslav and Richemond, Pierre H and Doucet, Arnaud and Strudel, Robin and Dyer, Chris and Durkan, Conor and others},
  journal={arXiv preprint arXiv:2211.15089},
  year={2022}
}

@article{kong2023information,
  title={Information-theoretic diffusion},
  author={Kong, Xianghao and Brekelmans, Rob and Steeg, Greg Ver},
  journal={arXiv preprint arXiv:2302.03792},
  year={2023}
}

@article{kong2023interpretable,
  title={Interpretable diffusion via information decomposition},
  author={Kong, Xianghao and Liu, Ollie and Li, Han and Yogatama, Dani and Steeg, Greg Ver},
  journal={arXiv preprint arXiv:2310.07972},
  year={2023}
}

@article{franzese2025latent,
  title={Latent Abstractions in Generative Diffusion Models},
  author={Franzese, Giulio and Martini, Mattia and Corallo, Giulio and Papotti, Paolo and Michiardi, Pietro},
  journal={Entropy},
  volume={27},
  number={4},
  pages={371},
  year={2025},
  publisher={MDPI}
}

@article{guo2005mutual,
  title={Mutual information and minimum mean-square error in Gaussian channels},
  author={Guo, Dongning and Shamai, Shlomo and Verd{\'u}, Sergio},
  journal={IEEE transactions on information theory},
  volume={51},
  number={4},
  pages={1261--1282},
  year={2005},
  publisher={IEEE}
}

@article{lawler2010stochastic,
  title={Stochastic calculus: An introduction with applications},
  author={Lawler, Gregory F},
  journal={American Mathematical Society},
  year={2010}
}

@article{dytso2022conditional,
  title={Conditional mean estimation in Gaussian noise: A meta derivative identity with applications},
  author={Dytso, Alex and Poor, H Vincent and Shitz, Shlomo Shamai},
  journal={IEEE Transactions on Information Theory},
  volume={69},
  number={3},
  pages={1883--1898},
  year={2022},
  publisher={IEEE}
}

@article{vincent2011connection,
  title={A connection between score matching and denoising autoencoders},
  author={Vincent, Pascal},
  journal={Neural computation},
  volume={23},
  number={7},
  pages={1661--1674},
  year={2011},
  publisher={MIT Press}
}

@inproceedings{karras2024analyzing,
  title={Analyzing and improving the training dynamics of diffusion models},
  author={Karras, Tero and Aittala, Miika and Lehtinen, Jaakko and Hellsten, Janne and Aila, Timo and Laine, Samuli},
  booktitle={Proceedings of the IEEE/CVF Conference on Computer Vision and Pattern Recognition},
  pages={24174--24184},
  year={2024}
}

@article{heusel2017gans,
  title={Gans trained by a two time-scale update rule converge to a local nash equilibrium},
  author={Heusel, Martin and Ramsauer, Hubert and Unterthiner, Thomas and Nessler, Bernhard and Hochreiter, Sepp},
  journal={Advances in neural information processing systems},
  volume={30},
  year={2017}
}

@article{oquab2023dinov2,
  title={Dinov2: Learning robust visual features without supervision},
  author={Oquab, Maxime and Darcet, Timoth{\'e}e and Moutakanni, Th{\'e}o and Vo, Huy and Szafraniec, Marc and Khalidov, Vasil and Fernandez, Pierre and Haziza, Daniel and Massa, Francisco and El-Nouby, Alaaeldin and others},
  journal={arXiv preprint arXiv:2304.07193},
  year={2023}
}

@article{lyu2022accelerating,
  title={Accelerating diffusion models via early stop of the diffusion process},
  author={Lyu, Zhaoyang and Xu, Xudong and Yang, Ceyuan and Lin, Dahua and Dai, Bo},
  journal={arXiv preprint arXiv:2205.12524},
  year={2022}
}

@article{stein2023exposing,
  title={Exposing flaws of generative model evaluation metrics and their unfair treatment of diffusion models},
  author={Stein, George and Cresswell, Jesse and Hosseinzadeh, Rasa and Sui, Yi and Ross, Brendan and Villecroze, Valentin and Liu, Zhaoyan and Caterini, Anthony L and Taylor, Eric and Loaiza-Ganem, Gabriel},
  journal={Advances in Neural Information Processing Systems},
  volume={36},
  pages={3732--3784},
  year={2023}
}

@article{krizhevsky2009learning,
  title={Learning multiple layers of features from tiny images},
  author={Krizhevsky, Alex and Hinton, Geoffrey and others},
  year={2009},
  publisher={Toronto, ON, Canada}
}

@article{karras2019style,
  title={A style-based generator architecture for generative adversarial networks},
  author={Karras, Tero and Laine, Samuli and Aila, Timo},
  journal={Proceedings of the IEEE/CVF conference on computer vision and pattern recognition},
  year={2019}
}

@article{russakovsky2015imagenet,
  title={Imagenet large scale visual recognition challenge},
  author={Russakovsky, Olga and Deng, Jia and Su, Hao and Krause, Jonathan and Satheesh, Sanjeev and Ma, Sean and Huang, Zhiheng and Karpathy, Andrej and Khosla, Aditya and Bernstein, Michael and others},
  journal={International journal of computer vision},
  volume={115},
  pages={211--252},
  year={2015},
  publisher={Springer}
}

@article{kingma2021variational,
  title={Variational diffusion models},
  author={Kingma, Diederik and Salimans, Tim and Poole, Ben and Ho, Jonathan},
  journal={Advances in neural information processing systems},
  volume={34},
  pages={21696--21707},
  year={2021}
}
\bibliographystyle{iclr2025_delta}

%%%%%%%%%%%%%%%%%%%%%%%%%%%%%%%%%%%%%%%%%%%%%%%%%%%%%%%%%%%%

\newpage
\section*{NeurIPS Paper Checklist}

\begin{enumerate}

\item {\bf Claims}
    \item[] Question: Do the main claims made in the abstract and introduction accurately reflect the paper's contributions and scope?
    \item[] Answer: \answerYes{} % Replace by \answerYes{}, \answerNo{}, or \answerNA{}.
    \item[] Justification:
    \item[] Guidelines:
    \begin{itemize}
        \item The answer NA means that the abstract and introduction do not include the claims made in the paper.
        \item The abstract and/or introduction should clearly state the claims made, including the contributions made in the paper and important assumptions and limitations. A No or NA answer to this question will not be perceived well by the reviewers. 
        \item The claims made should match theoretical and experimental results, and reflect how much the results can be expected to generalize to other settings. 
        \item It is fine to include aspirational goals as motivation as long as it is clear that these goals are not attained by the paper. 
    \end{itemize}

\item {\bf Limitations}
    \item[] Question: Does the paper discuss the limitations of the work performed by the authors?
    \item[] Answer: \answerYes{} % Replace by \answerYes{}, \answerNo{}, or \answerNA{}.
    \item[] Justification: See sections \ref{section: Limitations} and \ref{section: Conclusions}.
    \item[] Guidelines:
    \begin{itemize}
        \item The answer NA means that the paper has no limitation while the answer No means that the paper has limitations, but those are not discussed in the paper. 
        \item The authors are encouraged to create a separate "Limitations" section in their paper.
        \item The paper should point out any strong assumptions and how robust the results are to violations of these assumptions (e.g., independence assumptions, noiseless settings, model well-specification, asymptotic approximations only holding locally). The authors should reflect on how these assumptions might be violated in practice and what the implications would be.
        \item The authors should reflect on the scope of the claims made, e.g., if the approach was only tested on a few datasets or with a few runs. In general, empirical results often depend on implicit assumptions, which should be articulated.
        \item The authors should reflect on the factors that influence the performance of the approach. For example, a facial recognition algorithm may perform poorly when image resolution is low or images are taken in low lighting. Or a speech-to-text system might not be used reliably to provide closed captions for online lectures because it fails to handle technical jargon.
        \item The authors should discuss the computational efficiency of the proposed algorithms and how they scale with dataset size.
        \item If applicable, the authors should discuss possible limitations of their approach to address problems of privacy and fairness.
        \item While the authors might fear that complete honesty about limitations might be used by reviewers as grounds for rejection, a worse outcome might be that reviewers discover limitations that aren't acknowledged in the paper. The authors should use their best judgment and recognize that individual actions in favor of transparency play an important role in developing norms that preserve the integrity of the community. Reviewers will be specifically instructed to not penalize honesty concerning limitations.
    \end{itemize}

\item {\bf Theory assumptions and proofs}
    \item[] Question: For each theoretical result, does the paper provide the full set of assumptions and a complete (and correct) proof?
    \item[] Answer: \answerYes{} % Replace by \answerYes{}, \answerNo{}, or \answerNA{}.
    \item[] Justification:
    \item[] Guidelines:
    \begin{itemize}
        \item The answer NA means that the paper does not include theoretical results. 
        \item All the theorems, formulas, and proofs in the paper should be numbered and cross-referenced.
        \item All assumptions should be clearly stated or referenced in the statement of any theorems.
        \item The proofs can either appear in the main paper or the supplemental material, but if they appear in the supplemental material, the authors are encouraged to provide a short proof sketch to provide intuition. 
        \item Inversely, any informal proof provided in the core of the paper should be complemented by formal proofs provided in appendix or supplemental material.
        \item Theorems and Lemmas that the proof relies upon should be properly referenced. 
    \end{itemize}

    \item {\bf Experimental result reproducibility}
    \item[] Question: Does the paper fully disclose all the information needed to reproduce the main experimental results of the paper to the extent that it affects the main claims and/or conclusions of the paper (regardless of whether the code and data are provided or not)?
    \item[] Answer: \answerYes{} % Replace by \answerYes{}, \answerNo{}, or \answerNA{}.
    \item[] Justification: We believe that the information provided in the paper should suffice for reproducing the main results in the paper. Furthermore, our code is made available. 
    \item[] Guidelines:
    \begin{itemize}
        \item The answer NA means that the paper does not include experiments.
        \item If the paper includes experiments, a No answer to this question will not be perceived well by the reviewers: Making the paper reproducible is important, regardless of whether the code and data are provided or not.
        \item If the contribution is a dataset and/or model, the authors should describe the steps taken to make their results reproducible or verifiable. 
        \item Depending on the contribution, reproducibility can be accomplished in various ways. For example, if the contribution is a novel architecture, describing the architecture fully might suffice, or if the contribution is a specific model and empirical evaluation, it may be necessary to either make it possible for others to replicate the model with the same dataset, or provide access to the model. In general. releasing code and data is often one good way to accomplish this, but reproducibility can also be provided via detailed instructions for how to replicate the results, access to a hosted model (e.g., in the case of a large language model), releasing of a model checkpoint, or other means that are appropriate to the research performed.
        \item While NeurIPS does not require releasing code, the conference does require all submissions to provide some reasonable avenue for reproducibility, which may depend on the nature of the contribution. For example
        \begin{enumerate}
            \item If the contribution is primarily a new algorithm, the paper should make it clear how to reproduce that algorithm.
            \item If the contribution is primarily a new model architecture, the paper should describe the architecture clearly and fully.
            \item If the contribution is a new model (e.g., a large language model), then there should either be a way to access this model for reproducing the results or a way to reproduce the model (e.g., with an open-source dataset or instructions for how to construct the dataset).
            \item We recognize that reproducibility may be tricky in some cases, in which case authors are welcome to describe the particular way they provide for reproducibility. In the case of closed-source models, it may be that access to the model is limited in some way (e.g., to registered users), but it should be possible for other researchers to have some path to reproducing or verifying the results.
        \end{enumerate}
    \end{itemize}

\item {\bf Open access to data and code}
    \item[] Question: Does the paper provide open access to the data and code, with sufficient instructions to faithfully reproduce the main experimental results, as described in supplemental material?
    \item[] Answer: \answerYes{} % Replace by \answerYes{}, \answerNo{}, or \answerNA{}.
    \item[] Justification: Link to our GitHub repo is provided in the abstract.
    \item[] Guidelines:
    \begin{itemize}
        \item The answer NA means that paper does not include experiments requiring code.
        \item Please see the NeurIPS code and data submission guidelines (\url{https://nips.cc/public/guides/CodeSubmissionPolicy}) for more details.
        \item While we encourage the release of code and data, we understand that this might not be possible, so “No” is an acceptable answer. Papers cannot be rejected simply for not including code, unless this is central to the contribution (e.g., for a new open-source benchmark).
        \item The instructions should contain the exact command and environment needed to run to reproduce the results. See the NeurIPS code and data submission guidelines (\url{https://nips.cc/public/guides/CodeSubmissionPolicy}) for more details.
        \item The authors should provide instructions on data access and preparation, including how to access the raw data, preprocessed data, intermediate data, and generated data, etc.
        \item The authors should provide scripts to reproduce all experimental results for the new proposed method and baselines. If only a subset of experiments are reproducible, they should state which ones are omitted from the script and why.
        \item At submission time, to preserve anonymity, the authors should release anonymized versions (if applicable).
        \item Providing as much information as possible in supplemental material (appended to the paper) is recommended, but including URLs to data and code is permitted.
    \end{itemize}

\item {\bf Experimental setting/details}
    \item[] Question: Does the paper specify all the training and test details (e.g., data splits, hyperparameters, how they were chosen, type of optimizer, etc.) necessary to understand the results?
    \item[] Answer: \answerYes{} % Replace by \answerYes{}, \answerNo{}, or \answerNA{}.
    \item[] Justification: For the main results of the paper, we believe that all the relevant information can be found in the paper. See Appendix \ref{appendix: experiments}.
    \item[] Guidelines:
    \begin{itemize}
        \item The answer NA means that the paper does not include experiments.
        \item The experimental setting should be presented in the core of the paper to a level of detail that is necessary to appreciate the results and make sense of them.
        \item The full details can be provided either with the code, in appendix, or as supplemental material.
    \end{itemize}

\item {\bf Experiment statistical significance}
    \item[] Question: Does the paper report error bars suitably and correctly defined or other appropriate information about the statistical significance of the experiments?
    \item[] Answer: \answerNo{} % Replace by \answerYes{}, \answerNo{}, or \answerNA{}.
    \item[] Justification: All the relevant metrics for images were calculated only once and that number is reported. 
    \item[] Guidelines:
    \begin{itemize}
        \item The answer NA means that the paper does not include experiments.
        \item The authors should answer "Yes" if the results are accompanied by error bars, confidence intervals, or statistical significance tests, at least for the experiments that support the main claims of the paper.
        \item The factors of variability that the error bars are capturing should be clearly stated (for example, train/test split, initialization, random drawing of some parameter, or overall run with given experimental conditions).
        \item The method for calculating the error bars should be explained (closed form formula, call to a library function, bootstrap, etc.)
        \item The assumptions made should be given (e.g., Normally distributed errors).
        \item It should be clear whether the error bar is the standard deviation or the standard error of the mean.
        \item It is OK to report 1-sigma error bars, but one should state it. The authors should preferably report a 2-sigma error bar than state that they have a 96\% CI, if the hypothesis of Normality of errors is not verified.
        \item For asymmetric distributions, the authors should be careful not to show in tables or figures symmetric error bars that would yield results that are out of range (e.g. negative error rates).
        \item If error bars are reported in tables or plots, The authors should explain in the text how they were calculated and reference the corresponding figures or tables in the text.
    \end{itemize}

\item {\bf Experiments compute resources}
    \item[] Question: For each experiment, does the paper provide sufficient information on the computer resources (type of compute workers, memory, time of execution) needed to reproduce the experiments?
    \item[] Answer: \answerNo{} % Replace by \answerYes{}, \answerNo{}, or \answerNA{}.
    \item[] Justification:
    \item[] Guidelines:
    \begin{itemize}
        \item The answer NA means that the paper does not include experiments.
        \item The paper should indicate the type of compute workers CPU or GPU, internal cluster, or cloud provider, including relevant memory and storage.
        \item The paper should provide the amount of compute required for each of the individual experimental runs as well as estimate the total compute. 
        \item The paper should disclose whether the full research project required more compute than the experiments reported in the paper (e.g., preliminary or failed experiments that didn't make it into the paper). 
    \end{itemize}
    
\item {\bf Code of ethics}
    \item[] Question: Does the research conducted in the paper conform, in every respect, with the NeurIPS Code of Ethics \url{https://neurips.cc/public/EthicsGuidelines}?
    \item[] Answer: \answerYes{} % Replace by \answerYes{}, \answerNo{}, or \answerNA{}.
    \item[] Justification:
    \item[] Guidelines:
    \begin{itemize}
        \item The answer NA means that the authors have not reviewed the NeurIPS Code of Ethics.
        \item If the authors answer No, they should explain the special circumstances that require a deviation from the Code of Ethics.
        \item The authors should make sure to preserve anonymity (e.g., if there is a special consideration due to laws or regulations in their jurisdiction).
    \end{itemize}

\item {\bf Broader impacts}
    \item[] Question: Does the paper discuss both potential positive societal impacts and negative societal impacts of the work performed?
    \item[] Answer: \answerYes{} % Replace by \answerYes{}, \answerNo{}, or \answerNA{}.
    \item[] Justification: See appendix \ref{appendix: broader impact}.
    \item[] Guidelines:
    \begin{itemize}
        \item The answer NA means that there is no societal impact of the work performed.
        \item If the authors answer NA or No, they should explain why their work has no societal impact or why the paper does not address societal impact.
        \item Examples of negative societal impacts include potential malicious or unintended uses (e.g., disinformation, generating fake profiles, surveillance), fairness considerations (e.g., deployment of technologies that could make decisions that unfairly impact specific groups), privacy considerations, and security considerations.
        \item The conference expects that many papers will be foundational research and not tied to particular applications, let alone deployments. However, if there is a direct path to any negative applications, the authors should point it out. For example, it is legitimate to point out that an improvement in the quality of generative models could be used to generate deepfakes for disinformation. On the other hand, it is not needed to point out that a generic algorithm for optimizing neural networks could enable people to train models that generate Deepfakes faster.
        \item The authors should consider possible harms that could arise when the technology is being used as intended and functioning correctly, harms that could arise when the technology is being used as intended but gives incorrect results, and harms following from (intentional or unintentional) misuse of the technology.
        \item If there are negative societal impacts, the authors could also discuss possible mitigation strategies (e.g., gated release of models, providing defenses in addition to attacks, mechanisms for monitoring misuse, mechanisms to monitor how a system learns from feedback over time, improving the efficiency and accessibility of ML).
    \end{itemize}
    
\item {\bf Safeguards}
    \item[] Question: Does the paper describe safeguards that have been put in place for responsible release of data or models that have a high risk for misuse (e.g., pretrained language models, image generators, or scraped datasets)?
    \item[] Answer: \answerNA{} % Replace by \answerYes{}, \answerNo{}, or \answerNA{}.
    \item[] Justification:
    \item[] Guidelines:
    \begin{itemize}
        \item The answer NA means that the paper poses no such risks.
        \item Released models that have a high risk for misuse or dual-use should be released with necessary safeguards to allow for controlled use of the model, for example by requiring that users adhere to usage guidelines or restrictions to access the model or implementing safety filters. 
        \item Datasets that have been scraped from the Internet could pose safety risks. The authors should describe how they avoided releasing unsafe images.
        \item We recognize that providing effective safeguards is challenging, and many papers do not require this, but we encourage authors to take this into account and make a best faith effort.
    \end{itemize}

\item {\bf Licenses for existing assets}
    \item[] Question: Are the creators or original owners of assets (e.g., code, data, models), used in the paper, properly credited and are the license and terms of use explicitly mentioned and properly respected?
    \item[] Answer: \answerYes{} % Replace by \answerYes{}, \answerNo{}, or \answerNA{}.
    \item[] Justification:
    \item[] Guidelines:
    \begin{itemize}
        \item The answer NA means that the paper does not use existing assets.
        \item The authors should cite the original paper that produced the code package or dataset.
        \item The authors should state which version of the asset is used and, if possible, include a URL.
        \item The name of the license (e.g., CC-BY 4.0) should be included for each asset.
        \item For scraped data from a particular source (e.g., website), the copyright and terms of service of that source should be provided.
        \item If assets are released, the license, copyright information, and terms of use in the package should be provided. For popular datasets, \url{paperswithcode.com/datasets} has curated licenses for some datasets. Their licensing guide can help determine the license of a dataset.
        \item For existing datasets that are re-packaged, both the original license and the license of the derived asset (if it has changed) should be provided.
        \item If this information is not available online, the authors are encouraged to reach out to the asset's creators.
    \end{itemize}

\item {\bf New assets}
    \item[] Question: Are new assets introduced in the paper well documented and is the documentation provided alongside the assets?
    \item[] Answer: \answerNA{} % Replace by \answerYes{}, \answerNo{}, or \answerNA{}.
    \item[] Justification:
    \item[] Guidelines:
    \begin{itemize}
        \item The answer NA means that the paper does not release new assets.
        \item Researchers should communicate the details of the dataset/code/model as part of their submissions via structured templates. This includes details about training, license, limitations, etc. 
        \item The paper should discuss whether and how consent was obtained from people whose asset is used.
        \item At submission time, remember to anonymize your assets (if applicable). You can either create an anonymized URL or include an anonymized zip file.
    \end{itemize}

\item {\bf Crowdsourcing and research with human subjects}
    \item[] Question: For crowdsourcing experiments and research with human subjects, does the paper include the full text of instructions given to participants and screenshots, if applicable, as well as details about compensation (if any)? 
    \item[] Answer: \answerNA{} % Replace by \answerYes{}, \answerNo{}, or \answerNA{}.
    \item[] Justification: 
    \item[] Guidelines:
    \begin{itemize}
        \item The answer NA means that the paper does not involve crowdsourcing nor research with human subjects.
        \item Including this information in the supplemental material is fine, but if the main contribution of the paper involves human subjects, then as much detail as possible should be included in the main paper. 
        \item According to the NeurIPS Code of Ethics, workers involved in data collection, curation, or other labor should be paid at least the minimum wage in the country of the data collector. 
    \end{itemize}

\item {\bf Institutional review board (IRB) approvals or equivalent for research with human subjects}
    \item[] Question: Does the paper describe potential risks incurred by study participants, whether such risks were disclosed to the subjects, and whether Institutional Review Board (IRB) approvals (or an equivalent approval/review based on the requirements of your country or institution) were obtained?
    \item[] Answer: \answerNA{} % Replace by \answerYes{}, \answerNo{}, or \answerNA{}.
    \item[] Justification: 
    \item[] Guidelines:
    \begin{itemize}
        \item The answer NA means that the paper does not involve crowdsourcing nor research with human subjects.
        \item Depending on the country in which research is conducted, IRB approval (or equivalent) may be required for any human subjects research. If you obtained IRB approval, you should clearly state this in the paper. 
        \item We recognize that the procedures for this may vary significantly between institutions and locations, and we expect authors to adhere to the NeurIPS Code of Ethics and the guidelines for their institution. 
        \item For initial submissions, do not include any information that would break anonymity (if applicable), such as the institution conducting the review.
    \end{itemize}

\item {\bf Declaration of LLM usage}
    \item[] Question: Does the paper describe the usage of LLMs if it is an important, original, or non-standard component of the core methods in this research? Note that if the LLM is used only for writing, editing, or formatting purposes and does not impact the core methodology, scientific rigorousness, or originality of the research, declaration is not required.
    %this research? 
    \item[] Answer: \answerNA{} % Replace by \answerYes{}, \answerNo{}, or \answerNA{}.
    \item[] Justification: 
    \item[] Guidelines:
    \begin{itemize}
        \item The answer NA means that the core method development in this research does not involve LLMs as any important, original, or non-standard components.
        \item Please refer to our LLM policy (\url{https://neurips.cc/Conferences/2025/LLM}) for what should or should not be described.
    \end{itemize}

\end{enumerate}

\newpage

\appendix

\section{Broader impact} \label{appendix: broader impact}

This work proposes entropic and rescaled entropic time schedules for generative diffusion models, improving performance, especially in low NFE regimes, without additional training overhead. Our approach may benefit applications relying on fast generation, such as medical imaging and digital content creation. Furthermore, our work contributes to a deeper understanding of the relationship between information theory and generative modeling, which may inspire further theoretical advancements.

As with all generative models, there is potential for misuse in generating synthetic media that could be used for disinformation or impersonation. Improvements in efficiency may increase such risks by lowering the barrier to large-scale generation.

\section{Entropy Production} \label{section: entropy production}

Here, we show
\begin{equation}
    \dot{\mathbf{H}}[\rvx_t] = \mathbb{E}_{p_t(x_t)} [\nabla (f)] + \frac{g^2_t}{2} \mathbb{E}_{p_t(x_t)} [|| \nabla \log p(\rvx_t) ||^2 ].
\end{equation}
By looking inside the integral of $\dot{\mathbf{H}}[\rvx_t]$, we get
\begin{equation}
\begin{aligned}
    \dot{\mathbf{H}}[\rvx_t] & = - \int \left( \dot{p}(x_t) \log p(x_t) + p(x_t) \frac{\dot{p}(x_t)}{p(x_t)} \right) dx_t \\
    & = - \int \dot{p}(x_t) \log p(x_t) dx_t - \frac{d}{dt} \int p(x_t) dx_t \\
    & = - \int \dot{p}(x_t) \log p(x_t) dx_t.
\end{aligned}
\end{equation}
Assuming our dynamic is determined by the SDE \ref{eq: SDE1}, we can use the Fokker-Planck equation to simplify the derivative as follows
\begin{equation}
\begin{aligned}
    \dot{\mathbf{H}}[\rvx_t] & = - \int \left(- \nabla \left( \left( f_t - \frac{g^2_t}{2} \nabla \log p(x_t) \right) p(x_t) \right) \right) \log p(x_t) dx_t \\
    & = - \int \Inner{f_t - \frac{g^2_t}{2} \nabla \log p(x_t)}{\nabla \log p(x_t)} p(x_t) dx_t \\
    & = - \int \inner{f_t}{\nabla \log p(x_t)} p(x_t) dx_t + \int \frac{g^2_t}{2} \inner{\nabla \log p(x_t)}{\nabla \log p(x_t)} p(x_t) dx_t \\
    & = - \int \inner{f_t}{\nabla p(x_t)} dx_t + \int \frac{g^2_t}{2} || \nabla \log p(x_t) ||^2 p(x_t) dx_t \\
    & =  \int \nabla (f_t) p(x_t) dx_t + \frac{g^2_t}{2} \mathbb{E}_{p_t(x_t)} [|| \nabla \log p(\rvx_t) ||^2 ] \\
    & =  \mathbb{E}_{p_t(x_t)} [\nabla (f_t)] + \frac{g^2_t}{2} \mathbb{E}_{p_t(x_t)} [|| \nabla \log p(\rvx_t) ||^2 ]
\end{aligned}
\end{equation}
which is exactly what we wanted to show. We used integration by parts in going from the first line to the second and from the fourth to the fifth.

\section{Optimal schedule for deterministic DDIM} \label{appendix: DDDIM schedule}

Here, we show that the optimal schedule for the deterministic DDIM \citep{song2022denoisingdiffusionimplicitmodels} for the EDM SDE is the same as the one given in \citet{sabour2024align}.

\begin{theorem}
    Let $p_{\text{data}}(\mathbf{x}) = \mathcal{N}(0, c^2 \mathbf{I})$ and the diffusion process is given by the EDM SDE. Sample $\rvx_{t_{\max}} \sim p(\mathbf{x}, t_{\max})$ and use the deterministic DDIM using $n$ steps along the schedule
    \[
    t_{\max} = t_n > t_{n-1} > \cdots > t_1 > t_0 = t_{\min}
    \]
    to obtain $\rvx_{t_{\min}}$. The optimal schedule $t^*$ minimizing the KL-divergence between $p(\rvx, t_{\min})$ and the distribution of $\rvx_{t_{\min}}$ is given by
    \[
    \quad t^*_i = c \tan \left( \left(1 - \frac{i}{n} \right) \alpha_{\min} + \frac{i}{n} \alpha_{\max} \right)
    \]
    where
    \[
    \alpha_{\min} := \arctan(t_{\min}/c), \quad \alpha_{\max} := \arctan(t_{\max}/c).
    \]
\end{theorem}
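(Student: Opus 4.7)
The strategy is to exploit the linearity of the DDIM update for Gaussian initial data, which will reduce the problem to a one-dimensional constrained optimization that is resolved by Jensen's inequality.

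First, I would instantiate the deterministic DDIM update on the EDM SDE (where $s(t)\equiv 1$ and $\sigma(t)=t$), for which the optimal denoiser of $p_0=\mathcal{N}(0,c^2 I)$ is the linear map $D^*(\rvx,t) = \frac{c^2}{c^2+t^2}\rvx$. Substituting this into the DDIM recursion $\rvx_{t_{i-1}} = \rvx_{t_i} + (t_{i-1}-t_i)\bigl(\rvx_{t_i}-D^*(\rvx_{t_i},t_i)\bigr)/t_i$ collapses the step to the scalar multiplication $\rvx_{t_{i-1}} = \frac{c^2+t_{i-1}t_i}{c^2+t_i^2}\,\rvx_{t_i}$. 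Since the initial law is $\mathcal{N}(0,(c^2+t_{\max}^2)I)$ and each step is a scalar multiple, the sampler's output is a centered Gaussian whose variance is a telescoping product along the schedule.

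Next, I would pass to the angular coordinates $\alpha_i = \arctan(t_i/c)$. Using the identities $c^2+t_i^2 = c^2\sec^2\alpha_i$ and $c^2+t_{i-1}t_i = c^2\cos(\alpha_i-\alpha_{i-1})/(\cos\alpha_{i-1}\cos\alpha_i)$, each step's multiplier rewrites as $\cos\alpha_i\cos(\alpha_i-\alpha_{i-1})/\cos\alpha_{i-1}$, so the $\cos\alpha_i/\cos\alpha_{i-1}$ factors telescope. Setting $\Delta_i := \alpha_i-\alpha_{i-1}$ and dividing by the target variance $c^2+t_{\min}^2 = c^2\sec^2\alpha_0$, the variance ratio takes the clean form $r = \prod_{i=1}^n \cos^2\Delta_i \le 1$, under the constraint $\sum_i \Delta_i = \alpha_{\max}-\alpha_{\min}$.

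Finally, the KL between two centered $d$-dimensional Gaussians of variance ratio $r\in(0,1]$ is a strictly decreasing function of $r$, so minimizing KL is equivalent to maximizing $\sum_i \log\cos\Delta_i$ subject to the fixed-sum constraint. Because $(\log\cos)''(x) = -\sec^2 x < 0$ on $[0,\pi/2)$, Jensen's inequality (equivalently, a Lagrange-multiplier argument) pins the unique maximizer to the uniform partition $\Delta_i \equiv (\alpha_{\max}-\alpha_{\min})/n$, yielding exactly $t_i^* = c\tan\bigl((1-i/n)\alpha_{\min}+(i/n)\alpha_{\max}\bigr)$. The main obstacle I anticipate is simply the trigonometric bookkeeping for the telescoping product and the verification that the resulting ratio is bounded by $1$ (so that the direction of KL monotonicity is unambiguous); once the variance ratio is in the form $\prod\cos^2\Delta_i$, the optimization step is standard.
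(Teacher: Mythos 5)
Your proposal is correct, and its first half coincides exactly with the paper's argument: both derive the optimal denoiser $D^*(\rvx,t)=\tfrac{c^2}{c^2+t^2}\rvx$ and collapse the DDIM step to the scalar multiplication $\rvx_{t_{i-1}}=\tfrac{c^2+t_{i-1}t_i}{c^2+t_i^2}\rvx_{t_i}$. Where you diverge is what happens next: the paper stops there, observes that this update is identical to the Euler ODE update in equation 18 of Sabour et al., and simply cites the proof of their Theorem 3.1; you instead carry the optimization out in full. Your completion checks out: with $\alpha_i=\arctan(t_i/c)$ the multiplier is $\cos\alpha_i\cos(\alpha_i-\alpha_{i-1})/\cos\alpha_{i-1}$, the $\cos\alpha_i/\cos\alpha_{i-1}$ factors telescope against the initial variance $c^2\sec^2\alpha_n$ to leave the variance ratio $r=\prod_i\cos^2\Delta_i\le 1$, the KL between centered Gaussians is strictly decreasing in $r$ on $(0,1]$ (in either argument order), and strict concavity of $\log\cos$ on $[0,\pi/2)$ forces the unique maximizer of $\sum_i\log\cos\Delta_i$ under $\sum_i\Delta_i=\alpha_{\max}-\alpha_{\min}$ to be the uniform partition. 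What your version buys is self-containment — the reader need not unpack the external reference — at the cost of the trigonometric bookkeeping you correctly anticipated; the paper's version buys brevity by outsourcing precisely that bookkeeping. No gap in either.
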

\begin{proof}
    The deterministic DDIM update is given as
    \[
    \rvx_{t_{i-1}} = \hat{x}_0(\rvx_{t_i}) + \frac{t_{i_{i-1}}}{t_i} \left( \rvx_{t_i} - \hat{x}_0(\rvx_{t_i}) \right)
    \]
    where $\hat{x}_0(\rvx_{t_i}) = \rvx_{t} + t^2_i \nabla \log p(\rvx_{t_i})$. By using an analytical expression for the score, we get a simplified expression for the update
    \[
    \rvx_{t_{i-1}} = \frac{c^2+t_{i-1}t_i}{c^2+t^2_i} \rvx_{t_i}.
    \]
    This turns out to be exactly the same update as in ODE Euler method (equation 18 in \citet{sabour2024align}). Therefore, our claim follows from the proof of theorem 3.1. in \citet{sabour2024align}.
\end{proof}

\section{Proofs from section \ref{section: isoentropic  time}} \label{section: proofs}

\begin{theorem}
    Given an SDE and initial data distribution $p_0(x)$, $\phi(t) = \mathbf{H}[x_0|x_t]$ and $\phi(t) = \int_0^t ds \sigma_s \dot{\mathbf{H}}[\rvx_0|\rvx_s]$ are proper time changes.
\end{theorem}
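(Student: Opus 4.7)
The plan is to reduce the theorem to two elementary properties, continuity and strict monotonicity, and then invoke the closed-form expression for the conditional entropy rate that was already derived in the preceding subsection. By the definition of a proper time change, these are exactly the two conditions that need to be verified for both candidate functions.

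First, I would treat $\phi(t) = \mathbf{H}[x_0 \mid x_t]$. Continuity follows from the fact that, under the standard regularity assumptions on $s(t)$ and $\sigma(t)$ used throughout the paper, the marginal $p_t$ and the forward kernel $p(x_t \mid x_0)$ depend continuously on $t$, so the integrals defining $\mathbf{H}[x_t]$ and $\mathbf{H}[x_t \mid x_0]$ are continuous in $t$, and hence so is their difference. For strict monotonicity, the key input is equation \ref{eq: connection between quad_error and conditional entropy production}, which states
\begin{equation*}
\dot{\mathbf{H}}[\rvx_0 \mid \rvx_t] \;=\; \frac{\dot\sigma(t)}{\sigma(t)^3}\,\epsilon_t^2.
\end{equation*}
Under the standing assumption that $\sigma$ is strictly increasing (so $\dot\sigma(t) > 0$) and $\sigma(t) > 0$, this derivative is nonnegative and is strictly positive whenever $\epsilon_t^2 > 0$. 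Since $\epsilon_t^2 = \mathbb{E}_{p_t}[\operatorname{tr}(\sigma_{x_0\mid x_t}^2)]$ is the mean posterior variance of $x_0$ given $x_t$, it vanishes only in the degenerate case where $p_0$ is a Dirac mass (for which the diffusion problem itself is trivial). Excluding this trivial case, $\dot\phi(t) > 0$ pointwise, which gives strict monotonicity.

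For the rescaled entropy $\phi(t) = \int_0^t \sigma(\tau)\,\dot{\mathbf{H}}[\rvx_0\mid\rvx_\tau]\,d\tau$, I would note that the integrand is continuous in $\tau$ by the same argument as above, so $\phi$ is continuous (indeed $C^1$) by the fundamental theorem of calculus. Its derivative is
\begin{equation*}
\dot\phi(t) \;=\; \sigma(t)\,\dot{\mathbf{H}}[\rvx_0\mid\rvx_t] \;=\; \frac{\dot\sigma(t)}{\sigma(t)^2}\,\epsilon_t^2,
\end{equation*}
which is strictly positive under the same non-degeneracy assumption, yielding strict monotonicity.

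The main obstacle, and the only place where a little care is required, is the behaviour at the boundary $t = 0$: as the paper itself remarks, for continuous data $\mathbf{H}[\rvx_0 \mid \rvx_t] \to -\infty$ as $t \to 0^+$. I would therefore state the result on the open interval where the diffusion is actually used for sampling, $(0, T]$ (or $[t_{\min}, t_{\max}]$), where both $\sigma(t)$ and $\dot\sigma(t)$ are bounded away from zero and $\epsilon_t^2$ is finite and positive; extension to the rescaled case at $t = 0$ is automatic because the $\sigma(\tau)$ factor suppresses the singular behaviour of $\dot{\mathbf{H}}$, making the integral convergent. Beyond this minor boundary subtlety, the proof is essentially an immediate consequence of the entropy-rate identity already established.
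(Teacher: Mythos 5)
Your proof is correct and takes essentially the same route as the paper's: both reduce the claim to showing $\dot{\mathbf{H}}[\rvx_0|\rvx_t] > 0$ via the identity $\dot{\mathbf{H}}[\rvx_0|\rvx_t] = \frac{\dot\sigma(t)}{\sigma(t)^3}\,\epsilon_t^2$ and the observation that the squared error vanishes only when $p_0$ is a single data point. Your added care about continuity and the $t \to 0$ boundary elaborates on, but does not depart from, the paper's two-line argument.
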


\begin{proof}
    As already mentioned, a proper time change must be a strictly increasing, continuous function. Since $\mathbf{H}[x_0|x_t]$ has a derivative (see section \ref{section: Estimating a conditional entropy}), we need to show that it is positive. However, our claim follows from equation \ref{eq: connection between quad_error and conditional entropy production} (the squared error is equal to zero only when an initial distribution consists of one data point).
\end{proof}

\begin{theorem}
    Given two SDEs as given in definition \ref{definition: equivalence up to a time change}, and following time changes
    \begin{enumerate}
    \item $ \phi: t \mapsto s=f(t) $
    \item $ \Phi_t: t \mapsto \mathbf{H}_t[\rvx_0|\rvx_t] $
    \item $ \Phi_s: s \mapsto \mathbf{H}_s[\rvx_0|\rvx_s] $,
\end{enumerate}
    it follows that
    $$ F := \Phi_s \circ \phi \circ \Phi_t^{-1}: \mathbf{H}_t[\rvx_0|\rvx_t] \mapsto \mathbf{H}_s[\rvx_0|\rvx_s] $$
    is a proper time change implementing the equivalence and is equal to the identity map, $F = id$.
\end{theorem}

\begin{proof}
Immediately, we can see that $g$ is a proper time change since it is composed of other time changes. Similarly, using a chain rule, it is observed that $g$ implements the equivalence. Furthermore,
\begin{equation}
\begin{aligned}
    F(\mathbf{H}_t[\rvx_0|\rvx_t]) & = (\Phi_s \circ \phi) (\Phi_t^{-1}(\mathbf{H}_t[\rvx_0|\rvx_t])) = \Phi_s(\phi(t)) = \mathbf{H}_s[\rvx_0|\rvx_{\phi(t)}].
\end{aligned}
\end{equation}
However, since $p_t(x) = q_{\phi(t)}(x)$ and $p(x_t|x_0) = q(x_{\phi(t)}|x_0)$, it follows
\begin{equation}
\begin{aligned}
    \mathbf{H}_t[\rvx_0|\rvx_t] & = - \iint p(x_t, x_0) \ln{(p(x_0|x_t))} dx_0 dx_t \\
    & = - \iint q(x_{\phi(t)}, x_0) \ln{(q(x_0|x_{\phi(t)}))} dx_0 dx_{\phi(t)} = \mathbf{H}_s[\rvx_0|\rvx_{\phi(t)}],
\end{aligned}
\end{equation}
where $x_t = x_{\phi(t)}$ (i.e. are the same spatial point) and time subscripts represent at which point in time the probability distribution is evaluated. This proves that $F = id$.
\end{proof}

Similarly, we can prove the same claim for the rescaled entropic time since $\sigma(t) = \sigma(\phi(t))$ for any proper change of time $\phi$.

\section{Rescaled entropy for Gaussian data}
\label{appendix: rescaled entropy}

Here, we show that, in the case of the EDM noise schedule, the rescaled entropic time is the optimal sampling schedule for the ODE flow when data comes from a normal distribution with variance $c^2$ (equation \ref{eq: optimal schedule}).

Recall the expression for the rescaled entropy, $\int_0^t \sigma(\tau) \dot{\mathbf{H}}[\rvx_0 | \rvx_{\tau}] d\tau$. From equation \ref{eq: EDM conditional entropy}, we have
\begin{equation}
\begin{aligned}
    \int_0^t \sigma(\tau) \dot{\mathbf{H}}[\rvx_0 | \rvx_{\tau}] d\tau & = \int_0^t \left( D \dot{\sigma}(\tau) - s(\tau)^2 \dot{\sigma}(\tau) \sigma(\tau)^2 \mathbb{E}_{p_{\tau}(x_{\tau})} [|| \nabla \log p_{\tau}(\rvx_{\tau}) ||^2 ] \right) d\tau.
\end{aligned}
\end{equation}
Using the facts that $\sigma(\tau) = \tau$, $s(\tau) = 1$ and $\nabla \log p_{\tau}(x_{\tau}) = \frac{-x_{\tau}}{s(\tau)^2\sigma(\tau)^2+s(\tau)^2c^2}$, we get
\begin{equation}
\begin{aligned}
    \int_0^t \sigma(\tau) \dot{\mathbf{H}}[\rvx_0 | \rvx_{\tau}] d\tau & = \int_0^t \left( D - \tau^2 \frac{D}{\tau^2+c^2} \right) d\tau \\
    & = \int_0^t \frac{D c^2}{\tau^2+c^2} d\tau = D c \arctan\left( \frac{t}{c} \right).
\end{aligned}
\end{equation}
Therefore, a linear sampling schedule, $[t_{min}, t_1, ... t_{max}]$, in the rescaled entropic time is given by
\begin{equation}
     D c \arctan\left( \frac{t_i}{c} \right) = D c \left( \alpha_{min} + \frac{i}{N} (\alpha_{max} - \alpha_{min}) \right)
\end{equation}
where $\alpha_{min/max} = \arctan \left( \frac{t_{min/max}}{c} \right)$. Exactly the same as equation \ref{eq: optimal schedule}.

\section{Connection with a squared error and loss} \label{section: connection with a quadratic error and loss}

In this Appendix, we show connections between conditional entropy production and some commonly used expressions in the diffusion literature. Firstly, we show how the conditional entropy production is related to the squared error at time $t$, $\epsilon_t^2$.

\begin{equation}
\begin{aligned}
    \epsilon_t^2 & = \mathbb{E}_{p(x_0,x_t)} [||\rvx_0 - \hat{\rvx}_0(\rvx_t)||^2] = \iint ||x_0 - \hat{x}_0(x_t)||^2 p(x_t|x_0) p(x_0) dx_t dx_0 \\
    & = \iint \left|\left| x_0 - \frac{(x_t + s(t)^2 \sigma(t)^2 \nabla(\log p(x_t))}{s(t)} \right|\right|^2 p(x_t|x_0) p(x_0) dx_t dx_0
\end{aligned}
\end{equation}

The squared error encapsulates our uncertainty at time $t$ about the final sample $x_0$. The following simplification of the above equation gives a more precise meaning.

\begin{equation}
\begin{aligned}
    \epsilon_t^2 & = \mathbb{E}_{p_t(x_t)} [ \mathbb{E}_{p(x_0 | x_t)}[||\rvx_0 - \mathbb{E}_{p(y_0 | x_t)}[\rvy_0]||^2] ] \\
    & = \mathbb{E}_{p_t(x_t)} [tr(\sigma_{\rvx_0|\rvx_t}^2)].
\end{aligned}
\end{equation}

From Appendix \ref{section: Tweedie second order}, we know

\begin{equation}
    Var_{p(x_0|x_t)}[\rvx_0] = \sigma(t)^2 (I + s(t)^2 \sigma(t)^2 H[\log p_t(x_t)]).
\end{equation}
Hence, 

\begin{equation}
\begin{aligned}
    \epsilon_t^2 = \mathbb{E}_{p_t(x_t)} [tr(\sigma_{\rvx_0 | \rvx_t}^2)] & = \sigma(t)^2 \mathbb{E}_{p_t(x_t)}[ tr(I + s(t)^2 \sigma(t)^2 H[\log p(\rvx_t)]) ] \\
    & = \sigma(t)^2 ( D - s(t)^2 \sigma(t)^2 \mathbb{E}_{p_t(x_t)}[ || \nabla \log (p_t(\rvx_t)) ||^2 ] ) \\
    & = \frac{\sigma(t)^3}{\dot{\sigma}(t)} \left( \frac{D \dot{\sigma}(t)}{\sigma(t)} - s(t)^2 \dot{\sigma}(t) \sigma(t) \mathbb{E}_{p_t(x_t)}[ || \nabla \log (p_t(\rvx_t)) ||^2 ] \right) \\
    & = \frac{\sigma(t)^3}{\dot{\sigma}(t)} \dot{\mathbf{H}}[\rvx_0|\rvx_t].
\end{aligned}
\end{equation}

Following notation from \citet{karras2022elucidating} for the loss at time $t$, we have

\begin{equation}
    \mathcal{L}(t) = \mathbb{E}_{p_0(x_0), \mathcal{N}(\epsilon; 0, I)} [\lambda(t) \norm{c_{out}(t) F_{\theta} - s(t) x_0 + c_{skip}(t) (s(t) x_0 + s(t) \sigma(t) \epsilon)}^2].
\end{equation}

The formula for a prediction $\hat{x}_0(x_t)$ is given by

\begin{equation}
    \hat{x}_0(x_t) = \frac{c_{out}(t) F_{\theta}(x_t) + c_{skip}(t) x_t}{s(t)}.
\end{equation}

We can express the loss at time $t$ using the squared error as

\begin{equation}
\begin{aligned}
    \mathcal{L}(t) & = \lambda(t) \mathbb{E} [\norm{s(t) \rvx_0 - s(t) \hat{\rvx}_0(\rvx_t)}^2] = \lambda(t) s(t)^2 \epsilon_t^2.
\end{aligned}
\end{equation}

Furthermore, using the connection between a squared error and a conditional entropy production, we get

\begin{equation}
\begin{aligned}
    \dot{\mathbf{H}}[\rvx_0|\rvx_t] & = \frac{\dot{\sigma}(t)}{\lambda(t) s(t)^2 \sigma(t)^3} \mathcal{L}(t).
\end{aligned}
\end{equation}

\section{Tweedie's second-order formula} \label{section: Tweedie second order}

Assume we are given a distribution $p(y)$ that is obtained by adding a Gaussian noise to a distribution $q(x)$, i.e. $ q(y|x) = \mathcal{N}(y; s x, s^2 \sigma^2) $.

Now given some $y \sim p(y)$, if we are interested in which $x \sim q(x)$ generated it, the best we can do is guess $\hat{x}(y) = E_{q(x|y)}[x]$. Tweedie's formula gives us

\begin{equation}
    \mathbb{E}_{q(x|y)}[x] = \frac{y + s^2 \sigma^2 \nabla_y \log p(y)}{s}
\end{equation}

Now, we might ask how sure we are of our guess. To answer that question, we need to look at the variance,
$Var_{q(x|y)}[x] = \mathbb{E}_{q(x|y)}[x^2] - \mathbb{E}_{q(x|y)}[x]^2$. In this section, we derive the following result
\begin{equation}
    Var_{q(x|y)}[x] = s \sigma^2 \nabla_y E_{q(x|y)}[x] = \sigma^2 (I + s^2 \sigma^2 H[\log p(y)]).
\end{equation}
However, a more general result regarding the cumulants of $q(x|y)$ holds \citep{dytso2022conditional}. That is, all the cumulants can be calculated using the score function and its derivatives.

Since we already have $\mathbb{E}_{q(x|y)}[x]$, we need to find an expression for $\mathbb{E}_{q(x|y)}[x^2]$.

\begin{equation}
\begin{aligned}
    \mathbb{E}_{q(x|y)}[x^2] & = \int dx \frac{q(y|x) q(x)}{p(y)} x^2 = \int dx \frac{q(x) x}{p(y)} q(y|x) x \\
    & = \int dx \frac{x q(x)}{p(y)} \frac{y q(y|x) + s^2 \sigma^2 \nabla_y q(y|x)}{s} \\
    & = \frac{y \mathbb{E}_{q(x|y)}[x]}{s} + \frac{s^2 \sigma^2}{s p(y)} \nabla_y \int dx q(y|x) q(x) x
\end{aligned}
\end{equation}

Where in going from the first line to the second, we used $\nabla_y q(y|x) = \frac{sx-y}{s^2 \sigma^2} q(y|x)$. However, we seem to have encountered a problem with the second term in our expression. However, by using $q(x, y) = q(y|x) q(x) = q(x|y) p(y)$, for the second term we get

\begin{equation}
\begin{aligned}
    \nabla_y \int dx q(y|x) q(x) x & = \nabla_y \int dx q(x|y) p(y) x \\
    & = \nabla_y \left( p(y) \int dx q(x|y) x \right) = \nabla_y (p(y) \mathbb{E}_{q(x|y)}[x]) \\
    & = \mathbb{E}_{q(x|y)}[x] \nabla_y p(y) + p(y) \nabla_y \mathbb{E}_{q(x|y)}[x].
\end{aligned}
\end{equation}

Hence,

\begin{equation}
\begin{aligned}
    \mathbb{E}_{q(x|y)}[x^2] & = \frac{y \mathbb{E}_{q(x|y)}[x]}{s} + \frac{s^2 \sigma^2}{s p(y)} (\mathbb{E}_{q(x|y)}[x] \nabla_y p(y) + p(y) \nabla_y \mathbb{E}_{q(x|y)}[x]) \\
    & = \frac{y \mathbb{E}_{q(x|y)}[x]}{s} + \frac{s^2 \sigma^2 (\mathbb{E}_{q(x|y)}[x] \nabla_y \log p(y) + \nabla_y \mathbb{E}_{q(x|y)}[x])}{s} \\
    & = \mathbb{E}_{q(x|y)}[x] \frac{ y + s^2 \sigma^2 \nabla_y \log p(y) }{s} + \frac{s^2 \sigma^2}{s} \nabla_y \mathbb{E}_{q(x|y)}[x] \\
    & = E_{q(x|y)}[x]^2 + \frac{s^2 \sigma^2}{s} \nabla_y \mathbb{E}_{q(x|y)}[x].
\end{aligned}
\end{equation}

Now, we get an elegant expression for the variance

\begin{equation}
    Var_{q(x|y)}[x] = s \sigma^2 \nabla_y \mathbb{E}_{q(x|y)}[x] = \sigma^2 (1 + s^2 \sigma^2 \partial_{yy} \log p(y)).
\end{equation}

So far, we have been dealing with one-dimensional random variables, but it is easy to generalize all the steps to arbitrary dimensions, which gives us the general formula

\begin{equation}
    Var_{q(x|y)}[x] = s \sigma^2 \nabla_y E_{q(x|y)}[x] = \sigma^2 (I + s^2 \sigma^2 H[\log p(y)]).
\end{equation}
%where $H[\log(p(y))]$ is the Hessian of $\log(p(y))$.

\section{Experimental details and Additional results}
\label{appendix: experiments}

\subsection{Algorithms}

Here, we present algorithm~\ref{algorithm: spectral epsilon^2}, which estimates the spectral decomposition of the squared error from the data. Having numerical values of the squared error and its spectral decomposition can be used to compute other entropic quantities of interest. A similar algorithm can be used to obtain entropy and squared error for different orthonormal basis.

\begin{algorithm}
\caption{Estimation of spectral decomposition of $\epsilon^2(t)$}
\label{algorithm: spectral epsilon^2}
\begin{algorithmic}[1]
\State \textbf{Input:} $\;D_\theta(x,\sigma),\;\text{Encoder}(x),\;\sigma(t),\;s(t),\;t_{i\in\{0,\dots,N\}},\;M$
\For{$i \in \{0,\dots,N\}$} 
    \State Sample $x_0^{\,j} \sim p_0\text{ for } j = 1,\dots,M$
    \State $\mathbf{z}_0 \;=\;\text{Encoder}(\mathbf{x}_0)$
    \State Sample $\nu_j \sim \mathcal{N}(0,I)$ for \(j = 1,\dots,M\)
    \State \(\mathbf{z}_{t_i} = s(t_i)\,\mathbf{z}_0 \;+\; s(t_i)\,\sigma(t_i)\,\nu\)
    \State $\hat{\mathbf{z}}_0 = D_\theta\bigl(\mathbf{z}_{t_i},\,\sigma(t_i)\bigr)$
    \For{$j = 1,\dots,M$}
        \State $\Delta^j \;=\; \hat{z}_0^j \;-\; z_0^j$
        \State $\widehat{\Delta}^j \;=\; \operatorname{FFT2D}\bigl(\Delta^j\bigr)$
        \State $|\widehat{\Delta}^j|^2 \;=\; \bigl|\Re(\widehat{\Delta}^j)\bigr|^2 \;+\; \bigl|\Im(\widehat{\Delta}^j)\bigr|^2$
        \Comment{Modulus squared for each matrix entry}
    \EndFor
    \State $\mathcal{E}^2(t_i) = \frac{1}{M}\sum_{j=1}^M |\widehat{\Delta}^j|^2 $
    \Comment{Matrix of squared error for different frequencies}
\EndFor
\State \textbf{output} $\mathcal{E}^2(t)$
\end{algorithmic}
\end{algorithm}

% \begin{algorithm}
% \caption{Generate Samples}
% \begin{algorithmic}[1] 
% \State \textbf{Input:} $N$
% \State $x_T^{z_0}$, $x_T^{z_1} \sim \mathcal{N}(0,I)$
% \State$P(z_0|x_T) =  P(z_0)$
% \State $H_T= - \big(P(z_0)\log P(z_0) + (1-P(z_0))\log (1-P(z_0))\big)$
% \For{$t$ in $T:1$}
%     \For{$z \in \{z_0, z_1\}$ } 
%     \State $x_t = x_t^z$
%     \State $\mu_\theta(x_t; z_0) = \frac{1}{\sqrt{1-\beta_t}}\big(x_t-\frac{\beta_t}{\sqrt{1-\alpha_t}}\epsilon_\theta(x_t,z_0)\big)$ 
%     \State $\mu_\theta(x_t;z_1) = \frac{1}{\sqrt{1-\beta_t}}\big(x_t-\frac{\beta_t}{\sqrt{1-\alpha_t}}\epsilon_\theta(x_t,z_1)\big)$
%     \If {$t>1$}
%         \State $\epsilon \sim \mathcal{N}(0,I) \cdot \sqrt{\beta_t}$
%     \Else 
%         \State $\epsilon = 0$
%     \EndIf
%     \State $x_{t-1} = \mu_\theta(x_t;z_0) + \epsilon$
%     \State $P(z_0|x_{t-1}) = P(z_0|x_{t})\exp\big(-\frac{1}{1-\beta_t}\big(||x_{t-1}-\mu_\theta(x_t;z_0)||^2-||x_{t-1}-\mu_\theta(x_t;z_1)||^2\big)\big)$
%     \State $P(z_1|x_{t-1}) = 1 - P(z_0|x_{t-1})$
%     \State $x_{t-1}^z = x_{t-1}$
%     \State $H_{t-1}^z = \frac{1}{N_z}\sum_{x_{t-1}} P(z_0|x_{t-1})\log P(z_0|x_{t-1}) + P(z_1|x_{t-1})\log P(z_1|x_{t-1}) $ 
%     \EndFor
% \State $H_{t-1} = - \big(P(z_0)H_t^{z_0}+(1-P(z_0))H_t^{z_1}\big)$ 
% \EndFor
% \State \textbf{output:} $H_{T:0}$
% \end{algorithmic}
% \end{algorithm}

\subsection{One-dimensional experiments}

\begin{table}[htbp]
\captionsetup{type=figure}
  \centering
  \renewcommand{\arraystretch}{1.2}
  \resizebox{\textwidth}{!}{
  \begin{tabular}{
    >{\centering\arraybackslash}m{1.5cm} 
    >{\centering\arraybackslash}m{8cm} 
    >{\centering\arraybackslash}m{8cm}
  }
    % Row 1
    &    \hspace{0.75cm} Mixture of 5 & \hspace{0.75cm} Mixture of 15 \\
    % Row 2
    \raisebox{0.75cm}{Discrete} & 
    \includegraphics[width=\linewidth]{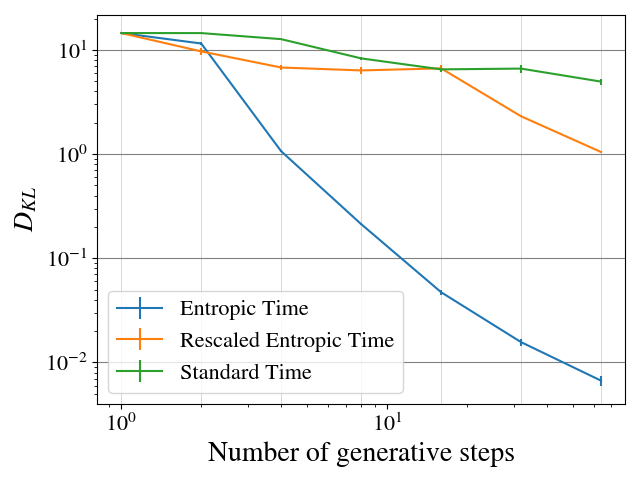} & 
    \includegraphics[width=\linewidth]{figures/1D_EDM/D_KL_for_15_starting_positions_EDM.png} \\
    % Row 3
    \raisebox{0.75cm}{Continuous} & 
    \includegraphics[width=\linewidth]{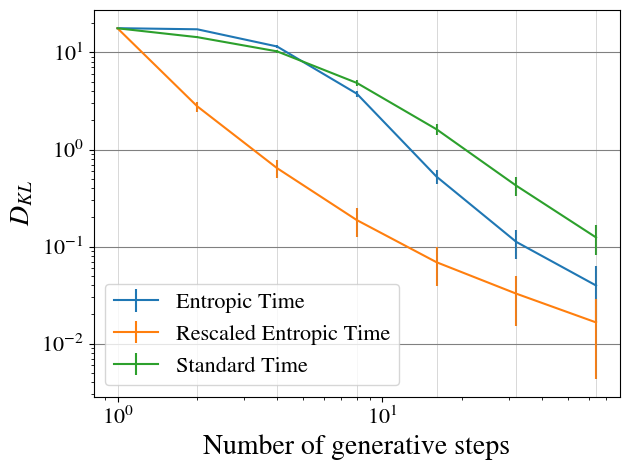} & 
    \includegraphics[width=\linewidth]{figures/1D_EDM/D_KL_EDM_15_Gaussians.png} \\
  \end{tabular}
  }
    \caption{Kullback–Leibler divergence against the number of generative steps for different time parameterizations for a mixture of data points (discrete) and Gaussians (continuous).}
    \label{fig: 1-D toy example, EDM}
\end{table}

We used an analytic expression of a score function to compare the performance of a few-step generation process in different time parameterizations in one dimension. We used equidistant steps in the standard time, entropic time, and rescaled entropic time. All entropic quantities were obtained from the squared error using equation \ref{eq: connection between quad_error and conditional entropy production}. The squared error was estimated at $10^4$ equidistant timesteps with $10^3$ samples at each timestep. We used a mixture of data points (discrete case) and a mixture of Gaussians (continuous case). In both cases, the data had a mean of zero and a standard deviation of one. For sample generation, we used the stochastic DDIM \citep{song2022denoisingdiffusionimplicitmodels}. Results are given in figure \ref{fig: 1-D toy example, EDM}.

For the discrete case, the performance was measured by creating nonoverlapping bins around data points, $[a_i-\epsilon, a_i+\epsilon]$, and calculating the Kullback-Leibler divergence between the initial distribution and the binned distribution ($p_{bin}(a_i) = $ probability of a generated sample ending up in the $i$-th bin). A variance-preserving SDE and EDM SDE were used for our experiments. Datapoints were randomly initialized and Kullback-Leibler divergence was estimated $10^2$ times using $10^4$ different paths, so the mean and variance of the KL estimate could be obtained. 

For the continuous case, the performance was measured by estimating the SDE-generated distribution using Gaussian kernel density estimation (with a standard deviation of $10^{-2}$) and then evaluating the KL divergence using Monte-Carlo methods with $10^3$ samples. Similarly to the discrete case, the KL divergence was estimated $10^2$ times using $10^4$ different paths to estimate the SDE-generated distribution.

\subsection{CIFAR10, FFHQ, ImageNet}

For ImageNet-$64$, we used EDM2-S and EDM2-L models and for ImageNet-512, we used EDM2-XS and EDM2-XXL models provided by \citet{karras2024analyzing}. For CIFAR10 and FFHQ, we used unconditional VP models provided by \citet{karras2022elucidating}. For generating samples, we used the stochastic and deterministic DDIM \citep{song2022denoisingdiffusionimplicitmodels}. To compare performance between different runs, we used the FID \citep{heusel2017gans} and, for ImageNet, FD-DINOv2 \citep{oquab2023dinov2, stein2023exposing} scores provided by the \citet{karras2024analyzing} implementation. We used FD-DINOv2 as it correlates better with human preferences \citep{stein2023exposing}. We used implemantations provided by \url{https://github.com/NVlabs/edm}, for CIFAR10 and FFHQ, and \url{https://github.com/NVlabs/edm2}, for ImageNet. We generated $50 000$ images and compared them against pre-computed reference statistics. All reported results are from a single (first) run. Class labels for ImageNet were drawn from a uniform distribution.

For all data sets, entropy and rescaled entropy were calculated using an estimation of squared error using equation \ref{eq: connection between quad_error and conditional entropy production}. The squared error was estimated at $128$ time points according to the EDM schedule ($\rho=7$, $\sigma_{min}=0.002$, $\sigma_{max}=80$) using the Monte-Carlo method with $1024$ samples at each timestep. In order to obtain (rescaled) entropy, any numerical integration technique should work. We decided on the simplest one, taking the difference in time steps, multiplying it by the derivative, and cumulatively summing it up to a time point $t$. 

We decided on 128 time points by comparing it to 512 time points for CIFAR10 and FFHQ, and noticing no perceivable difference in the final entropy curves. Also, rescaled entropy was calculated for ImageNet-64 with both network sizes, $S$ and $L$, and there was no significant difference between them, as expected (since it depends only on a forward process and the initial data distribution). Therefore, we used the smallest models to estimate entropic quantities. Regarding a spectral rescaled entropy, $10 000$ images were used to estimate amplitudes of different frequencies.

As already stated in the main text, the entropic time generated blurry images and was not used for further comparison. An example of images generated with the deterministic DDIM solver using the entropic schedule over 64 steps, with the EDM2-L model, is given in figure \ref{fig: Entropy}. Results obtained for CIFAR10 and FFHQ are given in tables \ref{table:CIFAR10-FID} and \ref{table:FFHQ-FID}, respectively. Results for ImageNet-64 are given in table \ref{table: ImageNet-64 FID}. Examples of generated images for ImageNet-64 using the EDM and rescaled entropy schedules are given in figures \ref{fig: 64_s} and \ref{fig: 64_l}. For the sake of completeness, we include FID and FD-DINOv2 scores for models optimized for FID scores and models optimized for DINO scores in tables \ref{table: 512 FID} and \ref{table: 512 DINO}, respectively. We observe some interesting behavior of these results, such as DINO-optimized models giving better FID than FID-optimized ones for NFE=16. Also, we can see that the FID score can go up while the DINO score steadily decreases for DINO-optimized models. This shows that those two metrics asses and value vastly different properties of generated images. In addition, we notice that our DINO results are comparable to the results provided in \citet{karras2024analyzing} obtained using Heun second-order solver. Figure \ref{fig: example of generated images 2} show how the number of function evaluations affect the generated images when using EDM and rescaled entropic schedules. Examples of generated images for ImageNet-512 using the EDM and rescaled entropy schedules with stochastic DDIM are given in figures \ref{fig: SDDIM_512_xs_dino}, \ref{fig: SDDIM_512_xxl_dino}, and \ref{fig: SDDIM_512_xxl_fid}, while Examples of generated with deterministic DDIM are given in figures \ref{fig: DDDIM_512_xs_dino}, \ref{fig: DDDIM_512_xxl_dino}, and \ref{fig: DDDIM_512_xxl_fid}.

\begin{table}[htbp]
  \centering
  \caption{FID scores for different sampling schedules on CIFAR10 $32 \times 32$}
  \label{table:CIFAR10-FID}
  \resizebox{0.55\textwidth}{!}{%
  \begin{tabular}{llccc}
    \toprule
    \multirow{2}{*}{Solver} & \multirow{2}{*}{Schedule} 
      & \multicolumn{3}{c}{FID $\downarrow$} \\
    \cmidrule(lr){3-5}
      & & {NFE=16} & {NFE=32} & {NFE=64} \\
    \midrule
    \multirow{3}{*}{Stochastic DDIM} & EDM 
         & \text{33.30} & \text{13.76} & \text{6.36} \\ 
         \cmidrule(lr){2-5}
         & Rescaled Entropy 
         & \text{20.07} & \text{8.44} & \text{4.65} \\
         \cmidrule(lr){2-5}
         & Spectral Rescaled Entropy 
         & \textbf{19.77} & \textbf{8.28} & \textbf{4.47} \\
    \midrule
    \multirow{3}{*}{Deterministic DDIM} & EDM 
         & \text{9.06} & \text{4.18} & \text{2.77} \\
         \cmidrule(lr){2-5}
         & Rescaled Entropy 
         & \text{6.07} & \text{3.30} & \text{2.52} \\
         \cmidrule(lr){2-5}
         & Spectral Rescaled Entropy 
         & \textbf{5.95} & \textbf{3.24} & \textbf{2.51} \\
    \bottomrule
  \end{tabular}%
  }
\end{table}

\begin{table}[htbp]
  \centering
  \caption{FID and FD-DINOv2 scores for different sampling schedules for ImageNet-$64$}
  \label{table: ImageNet-64 FID}
  \resizebox{\textwidth}{!}{%
  \begin{tabular}{lll
                  S[table-format=2.2] S[table-format=2.2] S[table-format=2.2]
                  S[table-format=3.2] S[table-format=3.2] S[table-format=3.2]}
    \toprule
    \multirow{2}{*}{Solver} & \multirow{2}{*}{Network} & \multirow{2}{*}{Schedule} 
      & \multicolumn{3}{c}{FID $\downarrow$} 
      & \multicolumn{3}{c}{FD-DINOv2 $\downarrow$} \\
    \cmidrule(lr){4-6} \cmidrule(lr){7-9}
      & & & {NFE=16} & {NFE=32} & {NFE=64} & {NFE=16} & {NFE=32} & {NFE=64} \\
        \midrule
    \multirow{6}{*}{Stochastic DDIM} & \multirow{3}{*}{EDM2-S} & EDM 
         & \text{20.03} & \text{8.18} & \text{3.81} & \text{263.60} & \text{135.67} & \text{86.32} \\ \cmidrule(lr){3-9}
         & & Rescaled Entropy 
         & \text{11.69} & \text{4.95} & \text{2.75} & \text{194.02} & \text{109.55} & \text{81.25} \\
         \cmidrule(lr){3-9}
         & & Spectral Rescaled Entropy 
         & \textbf{11.46} & \textbf{4.76} & \textbf{2.70} & \textbf{193.81} & \textbf{109.16} & \textbf{79.75} \\
    \cmidrule(lr){2-9}
    & \multirow{3}{*}{EDM2-L} & EDM 
         & \text{22.60} & \text{9.46} & \text{4.44} & \text{284.74} & \text{141.70} & \text{79.86} \\ \cmidrule(lr){3-9}
         & & Rescaled Entropy 
         & \text{13.56} & \text{5.59} & \text{3.06} & \text{208.27} & \text{108.31} & \text{72.06} \\
         \cmidrule(lr){3-9}
         & & Spectral Rescaled Entropy 
         & \textbf{13.46} & \textbf{5.51} & \textbf{2.99} & \textbf{207.76} & \textbf{106.42} & \textbf{70.37} \\
    \midrule
    \multirow{6}{*}{Deterministic DDIM} & \multirow{3}{*}{EDM2-S} & EDM 
         & \text{5.00} & \text{2.49} & \text{1.90} & \text{128.25} & \text{99.64}  & \textbf{92.88} \\ \cmidrule(lr){3-9}
         & & Rescaled Entropy 
         & \textbf{3.46} & \text{2.15} & \textbf{1.77} & \textbf{117.26} & \textbf{98.28}  & \text{93.34} \\
         \cmidrule(lr){3-9}
         & & Spectral Rescaled Entropy 
         & \text{3.54} & \textbf{2.12} & \text{1.80} & \text{118.50} & \text{99.01} & \text{95.14} \\
    \cmidrule(lr){2-9}
    & \multirow{3}{*}{EDM2-L} & EDM 
         & \text{5.49} & \text{2.55} & \text{1.82} & \text{120.35} & \text{84.57}  & \textbf{74.87} \\ \cmidrule(lr){3-9}
         & & Rescaled Entropy 
         & \textbf{3.63} & \textbf{2.09} & \text{1.65} & \textbf{104.88} & \textbf{81.98}  & \text{75.87} \\
         \cmidrule(lr){3-9}
         & & Spectral Rescaled Entropy 
         & \text{3.75} & \textbf{2.09} & \textbf{1.61} & \text{106.52} & \text{82.64} & \text{75.16} \\
    \bottomrule
  \end{tabular}%
  }
\end{table}

\begin{table}[htbp]
  \centering
  \caption{FID and FD-DINOv2 scores for different sampling schedules for ImageNet-$512$ for models optimized for FID scores}
  \label{table: 512 FID}
  \resizebox{\textwidth}{!}{%
  \begin{tabular}{lll
                  S[table-format=2.2] S[table-format=2.2] S[table-format=2.2]
                  S[table-format=3.2] S[table-format=3.2] S[table-format=3.2]}
    \toprule
    \multirow{2}{*}{Solver} & \multirow{2}{*}{Network} & \multirow{2}{*}{Schedule} 
      & \multicolumn{3}{c}{FID $\downarrow$} 
      & \multicolumn{3}{c}{FD-DINOv2 $\downarrow$} \\
    \cmidrule(lr){4-6} \cmidrule(lr){7-9}
      & & & {NFE=16} & {NFE=32} & {NFE=64} & {NFE=16} & {NFE=32} & {NFE=64} \\
        \midrule
    \multirow{4}{*}{Stochastic DDIM} & \multirow{2}{*}{EDM2-XS} & EDM 
         & \text{32.31} & \text{10.01} & \text{4.98} & \text{419.27} & \text{199.69} & \text{131.94} \\ \cmidrule(lr){3-9}
         & & Rescaled Entropy 
         & \textbf{13.64} & \textbf{4.98} & \textbf{3.80} & \textbf{280.82} & \textbf{154.24} & \textbf{124.07} \\
    \cmidrule(lr){2-9}
    & \multirow{2}{*}{EDM2-XXL} & EDM 
         & \text{30.39} & \text{8.80} & \text{3.81} & \text{337.23} & \text{127.69} & \text{68.97} \\ \cmidrule(lr){3-9}
         & & Rescaled Entropy 
         & \textbf{13.38} & \textbf{3.83} & \textbf{2.60} & \textbf{186.68} & \textbf{82.47} & \textbf{57.41} \\
    \midrule
    \multirow{4}{*}{Deterministic DDIM} & \multirow{2}{*}{EDM2-XS} & EDM 
         & \text{10.42} & \text{4.81} & \text{3.83} & \textbf{212.78} & \textbf{154.41} & \textbf{137.68} \\ \cmidrule(lr){3-9}
         & & Rescaled Entropy 
         & \textbf{7.57} & \textbf{4.44} & \textbf{3.75} & \text{222.15} & \text{161.79}  & \text{142.22} \\
    \cmidrule(lr){2-9}
    & \multirow{2}{*}{EDM2-XXL} & EDM 
         & \text{9.68} & \text{3.47} & \text{2.41} & \text{137.35} & \text{81.26}  & \textbf{65.75} \\ \cmidrule(lr){3-9}
         & & Rescaled Entropy 
         & \textbf{5.91} & \textbf{2.78} & \textbf{2.14} & \textbf{125.35} & \textbf{79.80}  & \text{66.75} \\
    \bottomrule
  \end{tabular}%
  }
\end{table}

\begin{table}[htbp]
  \centering
  \caption{FID and FD-DINOv2 scores for different sampling schedules for ImageNet-$512$ for models optimized for DINO scores}
  \label{table: 512 DINO}
  \resizebox{\textwidth}{!}{%
  \begin{tabular}{lll
                  S[table-format=2.2] S[table-format=2.2] S[table-format=2.2]
                  S[table-format=3.2] S[table-format=3.2] S[table-format=3.2]}
    \toprule
    \multirow{2}{*}{Solver} & \multirow{2}{*}{Network} & \multirow{2}{*}{Schedule} 
      & \multicolumn{3}{c}{FID $\downarrow$} 
      & \multicolumn{3}{c}{FD-DINOv2 $\downarrow$} \\
    \cmidrule(lr){4-6} \cmidrule(lr){7-9}
      & & & {NFE=16} & {NFE=32} & {NFE=64} & {NFE=16} & {NFE=32} & {NFE=64} \\
        \midrule
    \multirow{4}{*}{Stochastic DDIM} & \multirow{2}{*}{EDM2-XS} & EDM 
         & \text{17.14} & \text{8.76} & \text{7.34} & \text{294.25} & \text{149.91} & \text{107.00} \\ \cmidrule(lr){3-9}
         & & Rescaled Entropy 
         & \textbf{7.76} & \textbf{6.39} & \textbf{6.43} & \textbf{182.11} & \textbf{109.68} & \textbf{97.10} \\
    \cmidrule(lr){2-9}
    & \multirow{2}{*}{EDM2-XXL} & EDM 
         & \text{15.94} & \text{7.36} & \text{6.06} & \text{218.10} & \text{95.21} & \text{60.79} \\ \cmidrule(lr){3-9}
         & & Rescaled Entropy 
         & \textbf{6.84} & \textbf{5.02} & \textbf{5.03} & \textbf{108.16} & \textbf{57.05} & \textbf{46.75} \\
    \midrule
    \multirow{4}{*}{Deterministic DDIM} & \multirow{2}{*}{EDM2-XS} & EDM 
         & \text{7.16} & \text{5.97} & \text{5.80} & \textbf{156.46} & \textbf{115.94}  & \text{107.05} \\ \cmidrule(lr){3-9}
         & & Rescaled Entropy 
         & \textbf{6.00} & \textbf{5.31} & \textbf{5.40} & \text{157.32} & \text{116.52}  & \textbf{106.84} \\
    \cmidrule(lr){2-9}
    & \multirow{2}{*}{EDM2-XXL} & EDM 
         & \text{5.48} & \text{4.10} & \text{4.03} & \text{79.56} & \text{52.60}  & \text{46.27} \\ \cmidrule(lr){3-9}
         & & Rescaled Entropy 
         & \textbf{3.81} & \textbf{3.54} & \textbf{3.70} & \textbf{68.36} & \textbf{48.26}  & \textbf{43.83} \\
    \bottomrule
  \end{tabular}%
  }
\end{table}

\begin{figure}[H]
    \centering
    \includegraphics[width=0.5\linewidth]{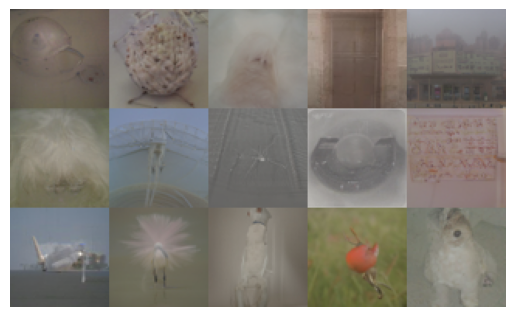}
    \caption{Images generated with the deterministic DDIM solver using the non-rescaled entropic schedule over 64 steps, with the EDM2-L model. It is clear from these images that rescaling is crucial in the continuous regime, probably due to the divergence of the differential entropy at $t \rightarrow 0$.}
    \label{fig: Entropy}
\end{figure}

\begin{figure}[htbp]
    \centering
    \begin{subfigure}[b]{0.47\textwidth}
        \centering
    \includegraphics[width=\textwidth]{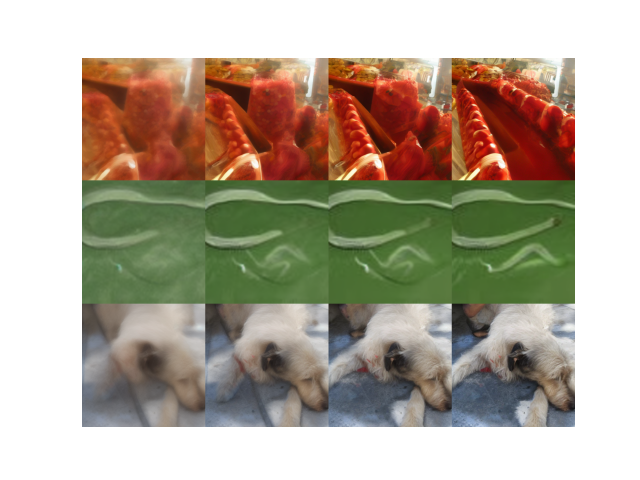}
        \caption{EDM}
    \end{subfigure}
    \hfill % Or \hspace{0.05\textwidth}
    \begin{subfigure}[b]{0.47\textwidth}
        \centering
    \includegraphics[width=\textwidth]{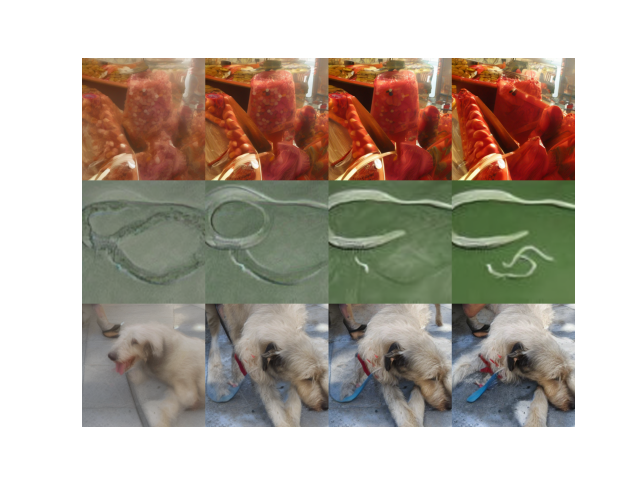}
        \caption{Rescaled Entropy}
    \end{subfigure}
    \caption{Comparison of generated images using EDM and rescaled entropic schedules with the same random seed. Images were generated using deterministic DDIM with NFE = 8, 16, 32, and 64.}
    \label{fig: example of generated images 2}
\end{figure}

\begin{figure}[htbp]
    \centering
    \begin{subfigure}[b]{0.45\textwidth}
        \centering
        \includegraphics[width=\textwidth]{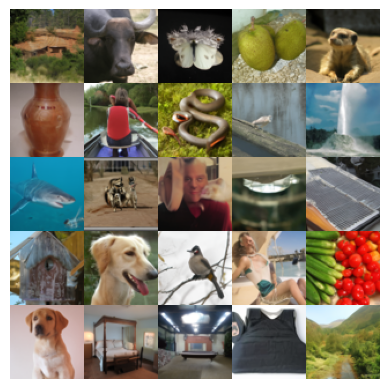}
        \caption{EDM}
    \end{subfigure}
    \hfill % Or \hspace{0.05\textwidth}
    \begin{subfigure}[b]{0.45\textwidth}
        \centering
        \includegraphics[width=\textwidth]{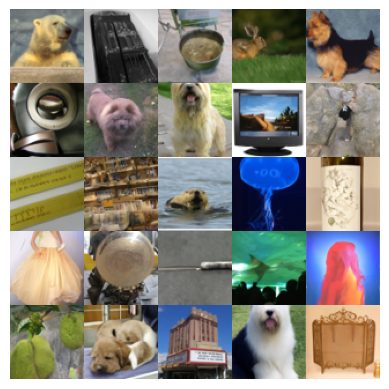}
        \caption{Rescaled Entropy}
    \end{subfigure}
    \caption{Images generated with the stochastic DDIM solver using the EDM schedule (left) and rescaled entropic schedule (right) over 64 steps, with the EDM2-S model.}
    \label{fig: 64_s}
\end{figure}

\begin{figure}[htbp]
    \centering
    \begin{subfigure}[b]{0.45\textwidth}
        \centering
        \includegraphics[width=\textwidth]{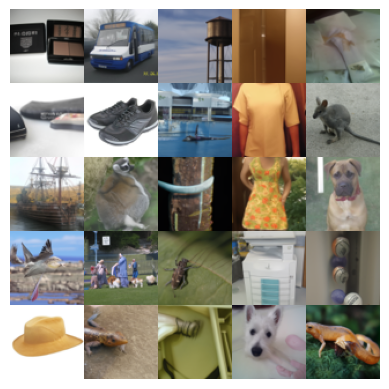}
        \caption{EDM}
    \end{subfigure}
    \hfill % Or \hspace{0.05\textwidth}
    \begin{subfigure}[b]{0.45\textwidth}
        \centering
        \includegraphics[width=\textwidth]{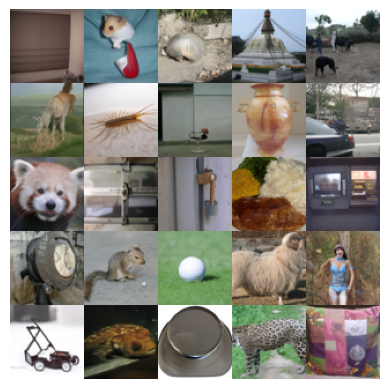}
        \caption{Rescaled Entropy}
    \end{subfigure}
    \caption{Images generated with the stochastic DDIM solver using the EDM schedule (left) and rescaled entropic schedule (right) over 64 steps, with the EDM2-L model.}
    \label{fig: 64_l}
\end{figure}

\begin{figure}[htbp]
    \centering
    \begin{subfigure}[b]{0.45\textwidth}
        \centering
        \includegraphics[width=\textwidth]{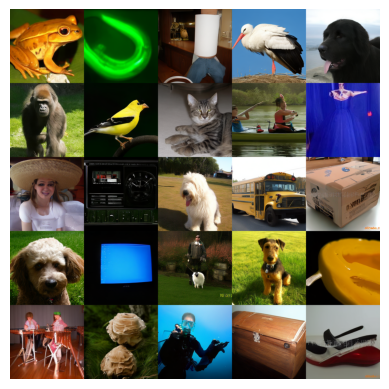}
        \caption{EDM}
    \end{subfigure}
    \hfill % Or \hspace{0.05\textwidth}
    \begin{subfigure}[b]{0.45\textwidth}
        \centering
        \includegraphics[width=\textwidth]{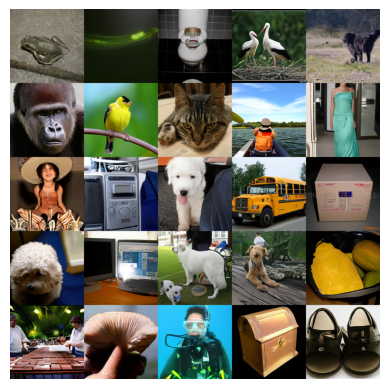}
        \caption{Rescaled Entropy}
    \end{subfigure}
    \caption{Images generated with the stochastic DDIM solver using the EDM schedule (left) and rescaled entropic schedule (right) over 64 steps, with the EDM2-XS DINO-optimized model.}
    \label{fig: SDDIM_512_xs_dino}
\end{figure}

\begin{figure}[htbp]
    \centering
    \begin{subfigure}[b]{0.45\textwidth}
        \centering
        \includegraphics[width=\textwidth]{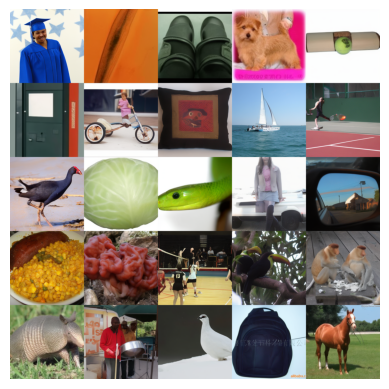}
        \caption{EDM}
    \end{subfigure}
    \hfill % Or \hspace{0.05\textwidth}
    \begin{subfigure}[b]{0.45\textwidth}
        \centering
        \includegraphics[width=\textwidth]{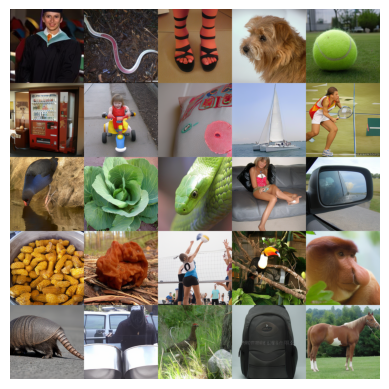}
        \caption{Rescaled Entropy}
    \end{subfigure}
    \caption{Images generated with the stochastic DDIM solver using the EDM schedule (left) and rescaled entropic schedule (right) over 64 steps, with the EDM2-XXL DINO-optimized model.}
    \label{fig: SDDIM_512_xxl_dino}
\end{figure}

\begin{figure}[htbp]
    \centering
    \begin{subfigure}[b]{0.45\textwidth}
        \centering
        \includegraphics[width=\textwidth]{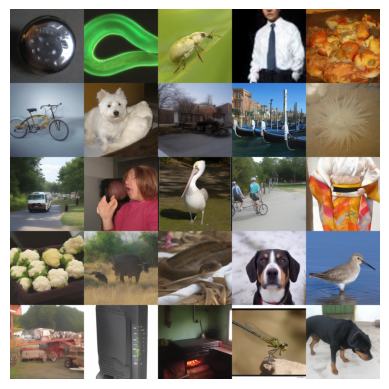}
        \caption{EDM}
    \end{subfigure}
    \hfill % Or \hspace{0.05\textwidth}
    \begin{subfigure}[b]{0.45\textwidth}
        \centering
        \includegraphics[width=\textwidth]{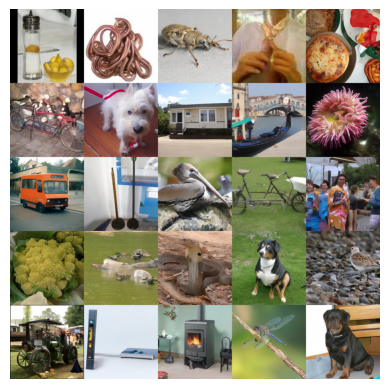}
        \caption{Rescaled Entropy}
    \end{subfigure}
    \caption{Images generated with the stochastic DDIM solver using the EDM schedule (left) and rescaled entropic schedule (right) over 64 steps, with the EDM2-XXL FID-optimized model.}
    \label{fig: SDDIM_512_xxl_fid}
\end{figure}

\begin{figure}[htbp]
    \centering
    \begin{subfigure}[b]{0.45\textwidth}
        \centering
        \includegraphics[width=\textwidth]{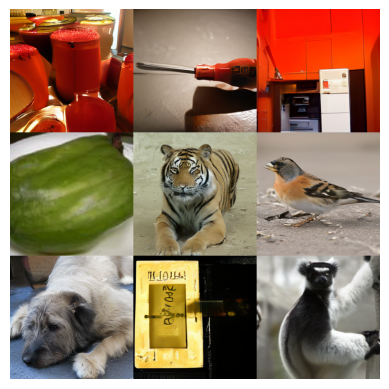}
        \caption{EDM}
    \end{subfigure}
    \hfill % Or \hspace{0.05\textwidth}
    \begin{subfigure}[b]{0.45\textwidth}
        \centering
        \includegraphics[width=\textwidth]{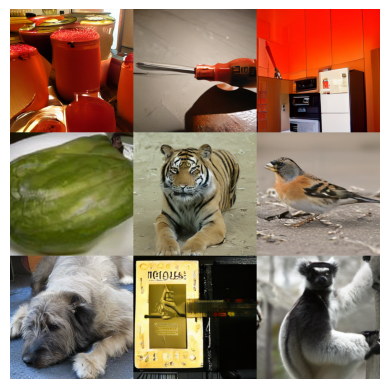}
        \caption{Rescaled Entropy}
    \end{subfigure}
    \caption{Images generated with the deterministic DDIM solver using the EDM schedule (left) and rescaled entropic schedule (right) over 64 steps, with the EDM2-XS DINO-optimized model.}
    \label{fig: DDDIM_512_xs_dino}
\end{figure}

\begin{figure}[htbp]
    \centering
    \begin{subfigure}[b]{0.45\textwidth}
        \centering
        \includegraphics[width=\textwidth]{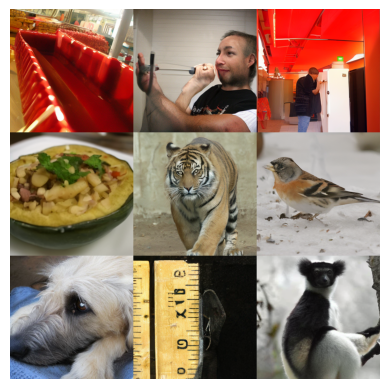}
        \caption{EDM}
    \end{subfigure}
    \hfill % Or \hspace{0.05\textwidth}
    \begin{subfigure}[b]{0.45\textwidth}
        \centering
        \includegraphics[width=\textwidth]{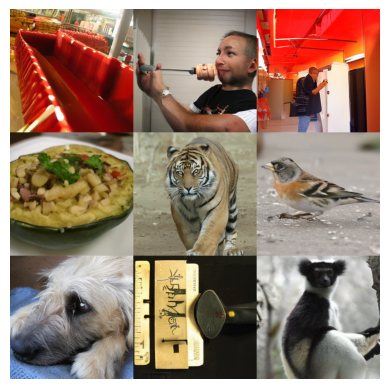}
        \caption{Rescaled Entropy}
    \end{subfigure}
    \caption{Images generated with the deterministic DDIM solver using the EDM schedule (left) and rescaled entropic schedule (right) over 64 steps, with the EDM2-XXL DINO-optimized model.}
    \label{fig: DDDIM_512_xxl_dino}
\end{figure}

\begin{figure}[htbp]
    \centering
    \begin{subfigure}[b]{0.45\textwidth}
        \centering
        \includegraphics[width=\textwidth]{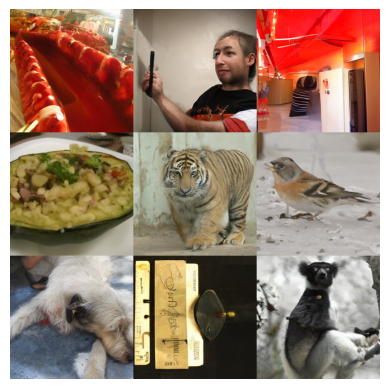}
        \caption{EDM}
    \end{subfigure}
    \hfill % Or \hspace{0.05\textwidth}
    \begin{subfigure}[b]{0.45\textwidth}
        \centering
        \includegraphics[width=\textwidth]{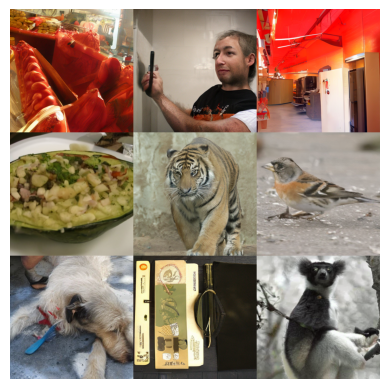}
        \caption{Rescaled Entropy}
    \end{subfigure}
    \caption{Images generated with the deterministic DDIM solver using the EDM schedule (left) and rescaled entropic schedule (right) over 64 steps, with the EDM2-XXL FID-optimized model.}
    \label{fig: DDDIM_512_xxl_fid}
\end{figure}

\end{document}